\def\eqref#1{equation~\ref{#1}}
\def\1{\bm{1}}
\def\vb{{\bm{b}}}
\def\vc{{\bm{c}}}
\def\ve{{\bm{e}}}
\def\vf{{\bm{f}}}
\def\vu{{\bm{u}}}
\def\mA{{\bm{A}}}
\def\mB{{\bm{B}}}
\def\mD{{\bm{D}}}
\def\mE{{\bm{E}}}
\def\mI{{\bm{I}}}
\def\mL{{\bm{L}}}
\def\mP{{\bm{P}}}
\def\mU{{\bm{U}}}
\def\mV{{\bm{V}}}
\def\mW{{\bm{W}}}
\DeclareMathAlphabet{\mathsfit}{\encodingdefault}{\sfdefault}{m}{sl}
\SetMathAlphabet{\mathsfit}{bold}{\encodingdefault}{\sfdefault}{bx}{n}
\theoremstyle{plain}
\newtheorem{theorem}{Theorem}[section]
\theoremstyle{definition}
\theoremstyle{remark}
\begin{document}

\twocolumn[
  \icmltitle{Interpretability and Generalization Bounds for Learning Spatial Physics}



  \icmlsetsymbol{equal}{*}

  \begin{icmlauthorlist}
    \icmlauthor{Alejandro F. Queiruga}{OpenAI}
    \icmlauthor{Theo Gutman-Solo}{Google}
    \icmlauthor{Shuai Jiang}{SNL}
  \end{icmlauthorlist}

  \icmlaffiliation{OpenAI}{OpenAI, San Francisco, CA, USA (current affiliation; work completed prior to joining OpenAI)}
  \icmlaffiliation{Google}{Google, Mountain View, CA, USA}
  \icmlaffiliation{SNL}{Sandia National Laboratories, Albuquerque, NM, USA}

  \icmlcorrespondingauthor{Alejandro F. Queiruga}{Alejandro.Queiruga@gmail.com}

  \icmlkeywords{Machine Learning, ICML} 

  \vskip 0.3in
]



\printAffiliationsAndNotice{}  

\begin{abstract}
  While there are many applications of machine learning (ML) to scientific problems that \emph{look} promising during training, achieving low training error does not guarantee convergence to the correct physics or generalization beyond the span of the training set.
  Using numerical analysis techniques, we rigorously quantify the accuracy, convergence rates, and generalization bounds of certain ML models applied to linear differential equations (DEs) for parameter discovery or forward problem solving. 
  Beyond the quantity and discretization of data, we identify that the {function space} of the data is critical to the generalization of the model.  
  A similar lack of generalization is empirically demonstrated for commonly used models, including physics-specific techniques.
  Counterintuitively, we find that different classes of models can exhibit opposing generalization behaviors.
  Based on our theoretical analysis, we also introduce a new mechanistic interpretability lens on scientific models whereby Green's function representations can be extracted from the weights of black-box models.
  Our results inform a new cross-validation technique for measuring generalization in physical systems, which can serve as a benchmark.
\end{abstract}

\section{Introduction}
The development of robust and rigorous a priori estimates for numerical methods is a major victory for scientific computing in the past half-century. 
These estimates give geometric flexibility and convergence guarantees, laying the groundwork for complex physics simulations across a wide range of applications. 
The basis of the theory begins with the humble 1D Poisson equation, studied by virtually all numerical analysts and serving as the bedrock for computational physics and engineering.

The canonical Poisson equation on the unit interval with homogeneous Dirichlet boundary conditions is given by
\begin{equation}\label{eqn:poisson-1d}
-k\frac{d^2u}{dx^2} = f(x), \qquad u(0) = u(1) = 0.
\end{equation}
where $u(x), f(x), k$ are the solution, forcing function, and (constant) material coefficient, respectively.
The solution to Eq.~(\ref{eqn:poisson-1d}) can be obtained using the Green's function which maps a function $f(x)$ to the solution $u(x)$ as follows:
\begin{equation}\label{eqn:green-operator}
    u(x) = \int_0^1f(s)G(s,x) \, \mathrm{d}s + (u_1-u_0)x + u_0
\end{equation}
where
\begin{equation}\label{}
    G(s,x) = \begin{cases}
        (1-s)x & \text{if} \quad x<s \\
        (1-x)s & \text{if} \quad x\geq s
    \end{cases}.
\end{equation}

In the ML context, the governing DE is typically unknown, and we are given discrete evaluations of forcing functions $f(x_i)$ and solutions $u(x_i)$ with $x_i \in (0, 1)$ corresponding to a sensor grid or pixels from a camera.
We then seek to train a model approximating the solution operator that can compute an unknown $u$ given an unseen $f$.
While the DE is continuous, the act of measurement discretizes all objects. 
For simplicity, we only consider uniformly spaced points $x_i = i\Delta x$ for $i = 0,\ldots, N_{\text{grid}}$.

The simplest black-box model is a linear operator given by
\begin{equation}\label{eqn:discrete-poisson}
\vu = \mW \vf
\end{equation}
where $\mW \in \mathbb{R}^{N_{\text{grid}}\times N_{\text{grid}}}$ and is trained via the usual gradient descent on mean squared error (MSE) loss, $\mathcal{L}=\sum\|\vu-\mW\vf\|^2_2$.
A natural question is what matrix this model learns: would the matrix $\mW$ correspond to matrices derived using Green's function or is it less structured?

We show the specific matrices in Fig.~\ref{fig:illustration}.
We first construct a dataset using analytical solutions with cubic forcing functions, and achieve low train and test error on that space resulting in a learned matrix $\mW_{p=3}$. 
The learned matrix (second row) qualitatively resembles the analytically derived matrix from Green's function (first row), but is quite far from the ``truth'' when we look at the inverse $\mW^{-1}_{p=3}$.
In general, the model will appear uninterpretable and fail to generalize to out-of-distribution (OOD) datasets.
However, with careful construction of the dataset (bottom of Fig.~\ref{fig:illustration}), we can recover both a full Green's function discretization, $\mA$, and a discretization of the differential operator, $\mL=\mA^{-1}$.

\begin{figure*}[ht]
    \centerline{\includegraphics[width=30pc]{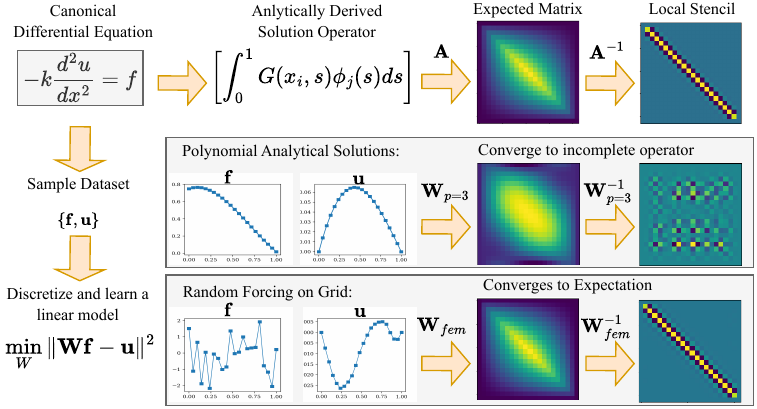}}
    \caption{\label{fig:illustration}Expectations when learning a black box linear model, $\vu=\mW\vf$.
    Despite some visual similarity to the Green's function operator (top), naive approaches are not guaranteed to converge to the true operator. The issue lies in the sampling procedure of the training data. The naive approach of using polynomial analytical solutions (illustrated as cubic polynomials) can achieve machine precision MSE loss, but even with infinitely many examples, will not converge to the general solution (center). Altering the construction of the training data, i.e. by using random piecewise linear functions (bottom) can recover the true operator. This allows for the extraction of a familiar discrete differential equation operator (right).}\vspace*{-5pt}
\end{figure*}

To formalize, let $W$ denote the learned model parameters and $\mathcal{L}(W, \mathcal{F})$ denote the loss, MSE in our case, evaluated on
solution-forcing pairs ${u, f}$ drawn from function space $\mathcal{F}$ (defined in Sec.~\ref{sec:dataset_construction}).
We define a model as \emph{generalizable} if, after minimizing $\mathcal{L}(W, \mathcal{F}_{\text{train}})$, it achieves comparably 
low error on a distinct function space $\mathcal{F}_{\text{test}} \neq \mathcal{F}_{\text{train}}$:
\begin{equation}
  \mathcal{L}(W, \mathcal{F}_{\text{test}}) \lesssim \mathcal{L}(W, \mathcal{F}_{\text{train}}).
\end{equation}
The symbol $\lesssim$ denotes a small tolerance.
We empirically observe that the OOD error can be orders of magnitude larger than the training error, $\mathcal{L}_{\text{test}} \gg
10^8 \times \mathcal{L}_{\text{train}}$, making the OOD transition stark and unambiguous.

To assist in understanding whether a model is generalizing or not beyond just evaluating losses, we introduce a mechanistic interpretation technique akin to viewing attention maps in DINO \cite{caron2021emerging} utilizing Green's function. 
Rather than use the analytical Green's function as a tool to model or solve PDEs, we use it to visually understand if a certain model has the capability to generalize beyond the training set or not. 

We also design a new cross-validation procedure that trains a given ML model with a dataset on a subspace, and then evaluates the error on datasets generated on different \emph{subspaces}. 
The datasets are varied not only by the grid spacing $\Delta x$, but also by the function class $f\sim\mathcal{F}(type,p)$.
We consider five classes of models spanning from underparameterized physics-informed fits to overparameterized black-box neural networks, and present the following contributions:
\begin{enumerate}\setlength{\itemsep}{2pt}\setlength{\parskip}{0pt}
\item {\it Underparameterized models with an inductive bias of known physical equations} (Thm.~\ref{thm:fg}, Sec.~\ref{sec:theorem_sindy}, Fig.~\ref{fig:error:prior}) --- We demonstrate both empirically and theoretically that the estimated parameter $w$ converges to the true parameter $k$ at a rate that depends on the grid spacing $\Delta x$ and finite difference order $q$, and that \emph{increases} with the order of the training function basis $p$.

\item {\it Exact parametrization for linear problems} (Thm.~\ref{thm:linear-case}, Sec.~\ref{sec:theorem_linear}, Fig.~\ref{fig:error:linear}) --- We derive an exact form for the discrete solution operator $\mA$ and theoretically and empirically demonstrate that the learned weights only converge to a subspace of $\mA$ determined by the basis of the training data.
The grid size $\Delta x$ does not change the error, but changes the scaling of learned parameters.

\item {\it Overparameterized neural models} (Sec.~\ref{sec:ood}, Figs.~\ref{fig:error:deeplin}--\ref{fig:error:mlp}) --- We empirically demonstrate that deep models \emph{are not guaranteed to generalize} outside of the training data, and that subspace generalization is also not guaranteed.
We introduce the Green's function mechanistic interpretation visualization in this context.

\item {\it Methods designed for DE learning} (Sec.~\ref{sec:ood}, Fig.~\ref{fig:cross_error_deeponet_neuralop}) --- We observe that the DeepONet and Fourier Neural Operator also do not always generalize, similarly to black-box models.
\item {\it Physics informed losses} (Sec.~\ref{sec:ood}, Fig.~\ref{fig:pi_deeponet}) --- We demonstrate that PINNs and Physics Informed DeepONets, which incorporate prior DE knowledge, exhibit similar failure modes.
\end{enumerate}
Surprisingly, we find that the data requirements and generalization behavior for each model class can vary dramatically, and in some cases, are even contrary to one another. For example, increasing the polynomial degree of the training data reduces error for the linear model but increases error for the finite-difference parameter fit.

\section{Background}
Applications of ML in science span across ``white-box'' to ``black-box''.
White-box methods, such as SINDy and symbolic regression, allow for the user to obtain closed-form, analytical formulas \citep{rudy2017data,udrescu2020ai}.
Black-box methods such as Neural Operators or DeepONets generally use NNs to learn dynamics \citep{raissi2019physics,lu2021learning}, exchanging interpretability for greater flexibility.
Between these two approaches, techniques have been developed that respect physical constraints such as conservation laws or other fundamental identities \citep{hansen2023learning, patel2022thermodynamically,trask2022enforcing}.
However, the adoption of these methods in high-consequence scenarios is lacking due to not understanding the exact failure modes.


To this end, recent studies have demonstrated that Neural ODEs and residual connections can overfit to time series data and may not learn true continuous models \citep{queiruga2020continuous,ott2021resnet,NEURIPS2022_ecc38927}, whereas higher order time discretization can elicit generalization \citep{zhu2022numerical,krishnapriyan2023learning}.
For spatiotemporal models, Krishnapriyan et al.\ \cite{krishnapriyan2021characterizing} demonstrated that PINNs will fail without controlled training strategies, whereas Sakarvadia et al.\ \cite{sakarvadia2025false} showed that PINOs fail to generalize across spatial resolutions. Other recent work addresses resolution-related challenges directly: ReNO \citep{bartolucci2023representation} and RINO \citep{bahmani2025resolution} introduce aliasing-free and discretization-invariant operator architectures that can be evaluated at arbitrary resolutions. These advances are orthogonal to our consideration: we study generalization when the test \emph{function space} differs from the training set, not when the discretization differs.

Our work draws on two complementary lines of prior research. Green's functions have been used in ML to construct solution operators and to prove data-requirement bounds for elliptic equations \citep{gin2021deepgreen, boulle2022data, boulle2023elliptic}. Separately, mechanistic interpretability has shown that known algorithms can be recovered by directly inspecting the weights of trained models \citep{nanda2023progress}; within SciML, related work has drawn analogies between transformer architectures and quadrature, nonlocal operators, or kernel functions \citep{cao2021choose, nguyen2022fourierformer, yu2024nonlocal}. We bridge these threads by treating the Green's function as both a theoretical anchor for generalization bounds and an interpretability lens on learned weights.

\section{Theoretical Results}\label{sec:theory}
\subsection{Dataset Construction}
\label{sec:dataset_construction}
To explore how the function space of the training data impacts the model, we construct different datasets by sampling from disparate function spaces. 
Let $\mathcal{F}(type, p)$ denote a function space of a type (e.g., polynomial, FEM, cosine, sine, etc.) with $p$ terms.
Given the function class, the dataset is constructed by sampling functions evaluated on a grid with points $x_i \in i\Delta x$ with $\Delta x = 1/N_{\text{grid}}$ (e.g., the spacing of the sensor grid).
Entries in the dataset are $\{x_i, f(x_i), u(x_i)\}$ from functions sampled from $f \sim \mathcal{F}(type,p)$ and $u$ obtained from analytical differentiation. 
In experiments, we consider function types of polynomial, sine $\sin(k \pi x)$, cosine $\cos(k \pi x)$, and piecewise linear (e.g. FEM). 


\begin{figure*}[!t]
  \centering
  \includegraphics[width=0.8\textwidth,trim= 0 8 0 3,clip]{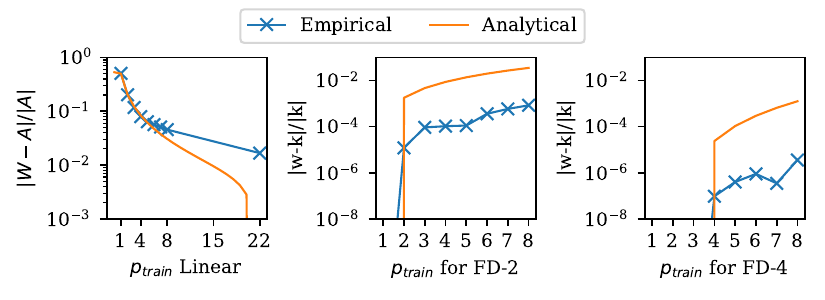}
  \caption{\label{fig:theory_error_one_line}  Comparison of numerical experiments to analytically derived ML parameters from Theorems \ref{thm:fg} and \ref{thm:linear-case} trained on polynomial spaces with $N_{grid}=22$. (Left) The observed error for the linear matrix matches predictions for lower order $p$. (Middle and right) For the finite difference model, the analytical assumptions overestimate the error, but the trend matches with observations.}
\end{figure*}

\begin{figure}[!htbp]
  \centering
  \includegraphics[width=0.8\linewidth, trim=0 8 0 8,clip]{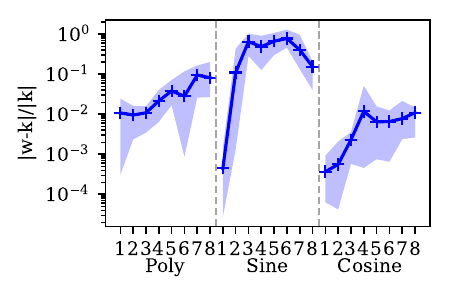}
  \caption{\label{fig:learned_parameter_with_pinn}Learning a parameter using a PINN inverse problem (mean of five runs; shaded region is the min--max spread). Mirroring Theorem~\ref{thm:fg}, the error grows with the polynomial degree $p$ of the training data, as the truncation error of the PDE loss is absorbed into the learned $w$.
  }
  \end{figure}

\subsection{\label{sec:theorem_sindy}Parameter Learning With a Discretized DE}
%
%
Consider the case where inductive bias of the DE is known, through either prior knowledge or a system identification method, but its physical parameter is unknown.
We fit the discretized Poisson equation for a parameter $w$, corresponding to the true value is $k$ from Eq.~\ref{eqn:poisson-1d}, such that $-w \frac{d^2u(x_i)}{dx^2} \approx f(x_i)$ over the data.
The DE is approximated using a standard finite difference stencil, such as the three-point stencil (FD-2),
\begin{equation}
\frac{d^2u}{dx^2} = \frac{ u_{i-1} -2 u_i + u_{i+1}}{\Delta x^2} + \mathcal{O}(\Delta x^2). \end{equation}
We can estimate $w$ given the data $\left\{u^{(i)}, f^{(i)}\right\}_{i=1}^N$ by analytically minimizing the MSE loss over the $N$ examples
\begin{equation}\label{eqn:w-min}
   \frac{1}{N}\sum_{n=0}^N\sum_{i=1}^{N_{grid}-1} \left(
      \frac{w}{\Delta x^2} \left( u_{i-1}^{(n)} -2 u_i^{(n)} + u_{i+1}^{(n)} \right) + f_i^{(n)}
      \right)^2.
  \end{equation}
The following a priori estimate shows that the relative error $e=|w-k|/|k|$ can be bounded from above for data sampled from polynomials: 
\begin{theorem}\label{thm:fg}
    Learning the parameter $k$ using a finite difference stencil of order $q$ given polynomial training data of degree $p$ on a grid of spacing $\Delta x$ results in $w=k$ when $p < q$,
   and an error
  \begin{equation}
      \frac{|w-k|}{|k|}=\mu_q\Delta x^q +\sum_{m=q+1}^p\mu_m\Delta x^m\approx \mu_q \Delta x^q \,\
  \end{equation} when $p \ge q$,
    for constants $\mu_m$ depending on the truncation error coefficients of the finite difference stencil.
\end{theorem}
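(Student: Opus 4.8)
The plan is to use that the objective in (\ref{eqn:w-min}) is a one–dimensional quadratic in $w$, so the minimizer has the closed form
\begin{equation*}
w \;=\; -\,\frac{\sum_{n,i} (D_q u^{(n)})_i\, f^{(n)}_i}{\sum_{n,i} (D_q u^{(n)})_i^2},
\end{equation*}
where $(D_q u)_i$ is the order-$q$ finite–difference image of $u''(x_i)$ (for FD-2, $(D_q u)_i=(u_{i-1}-2u_i+u_{i+1})/\Delta x^2$), and the sums run over examples $n$ and grid nodes $i$. The first key observation is that polynomial data constrains the exact solution: if $f^{(n)}$ has degree $p$ then the solution $u^{(n)}$ of (\ref{eqn:poisson-1d}) is a polynomial of degree $p+2$ and $-k\,(u^{(n)})''(x)\equiv f^{(n)}(x)$, hence \emph{exactly} $f^{(n)}_i=-k\,(u^{(n)})''(x_i)$ at every node. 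Substituting this collapses the estimate to $w = k\,\big(\textstyle\sum_{n,i}(D_qu^{(n)})_i (u^{(n)})''(x_i)\big)\big/\big(\sum_{n,i}(D_qu^{(n)})_i^2\big)$, a ratio of discrete Gram-type sums built from $u''$ and its finite–difference image.

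Next I would invoke the local truncation expansion of the stencil: for any polynomial $g$ of degree $d$ there is the \emph{finite} identity $(D_q g)_i = g''(x_i) + \sum_{m=q}^{d-2}\tau_m \Delta x^m\, g^{(m+2)}(x_i)$, with $\tau_m$ the truncation-error coefficients of the stencil and $\tau_m=0$ for $m<q$ by definition of order $q$. Applying this to $g=u^{(n)}$ (degree $\le p+2$) gives $(D_qu^{(n)})_i = (u^{(n)})''(x_i) + \eps^{(n)}_i$ with $\eps^{(n)}_i = \sum_{m=q}^{p}\tau_m\Delta x^m (u^{(n)})^{(m+2)}(x_i)$. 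When $p<q$ this sum is empty, so $(D_qu^{(n)})_i=(u^{(n)})''(x_i)$ exactly, numerator and denominator of the ratio coincide, and $w=k$ — the denominator being strictly positive because $f^{(n)}=x^j$, $j\ge 1$, gives $(u^{(n)})''\not\equiv 0$. This settles the first claim.

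For $p\ge q$ I would write
\begin{equation*}
\frac{w-k}{k} \;=\; \frac{\sum_{n,i}(D_qu^{(n)})_i(u^{(n)})''(x_i) - \sum_{n,i}(D_qu^{(n)})_i^2}{\sum_{n,i}(D_qu^{(n)})_i^2} \;=\; \frac{-\sum_{n,i}(D_qu^{(n)})_i\,\eps^{(n)}_i}{\sum_{n,i}(D_qu^{(n)})_i^2},
\end{equation*}
using $u'' - D_qu = -\eps$, and then expand in powers of $\Delta x$. The numerator is $-\sum_{m=q}^p\tau_m\Delta x^m\sum_{n,i}(u^{(n)})''(x_i)(u^{(n)})^{(m+2)}(x_i) + O(\Delta x^{2q})$ and the denominator equals $\sum_{n,i}((u^{(n)})'')^2\,(1+O(\Delta x^q))$ with strictly positive leading coefficient; dividing and collecting powers yields the claimed form, with each $\mu_m$ a product of $\tau_m$ with a ratio of Gram sums of $u''$ against $u^{(m+2)}$ (plus the cross- and self-truncation corrections that only enter at orders $\ge q$). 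In particular no term below $\Delta x^q$ appears and the degrees that survive are exactly $q,\dots,p$, because differentiating a degree-$(p+2)$ solution more than $p+2$ times kills the higher truncation terms.

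The hard part will be the bookkeeping rather than the leading term. Three points need care: (i) making precise that the average $\tfrac1N\sum_n$ over forcings with i.i.d.\ coefficients collapses, as $N\to\infty$, to a fixed positive-definite quadratic form over the $p$ monomial basis functions — cross terms vanish since $\E[c_jc_l]\propto\delta_{jl}$ — so that the $\mu_m$ are well defined and, in the limit, $\Delta x$-independent; (ii) the boundary nodes, where a centered order-$q$ stencil is not directly applicable, so the sum in (\ref{eqn:w-min}) must effectively be over interior nodes (or boundary stencils of matching order) for the exactness in the $p<q$ case and the clean leading order to survive; and (iii) tracking which powers of $\Delta x$ actually occur — the $\sum\eps^2$ term and the denominator expansion generate contributions beyond order $p$, so the stated equality should be read as the expansion through $\Delta x^p$, the genuinely new content being the absence of orders $<q$ and the cap at the data's polynomial degree $p$.
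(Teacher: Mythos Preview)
Your proposal is correct and follows essentially the same strategy as the paper's proof: write the closed-form minimizer of the scalar quadratic in $w$, use exactness of an order-$q$ stencil on polynomials of degree $<q+2$ to conclude $w=k$ when $p<q$, and for $p\ge q$ expand the resulting ratio via the truncation-error series, collapsing cross terms with the i.i.d.\ assumption on the coefficients. The paper differs only in presentation---it passes to a continuous $x$-integral before taking $\mathbb{E}_c$ and parametrizes the truncation error per monomial coefficient as $c_m(1+\zeta_m\Delta x^m)x^m$ rather than your more standard $\sum_m\tau_m\Delta x^m u^{(m+2)}$---and your caveats in (ii) and (iii) about boundary stencils and about reading the stated equality only through order $\Delta x^p$ are valid refinements that the paper's proof treats loosely.
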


The proof is in App.~\ref{app:proof_fd}, and empirically verified in \cref{sec:empirical-verification}.
In other words, the error is irreducible, even with noiseless and infinite data regime, and is set entirely by the stencil order. 
Worse, enriching the data \emph{hurts}: each polynomial degree past $q$ adds another $\mathcal{O}(\Delta x^m)$ term to the bias of $w$.
This contradicts the classical ML intuition that richer training data should only help.



\subsection{\label{sec:theorem_linear}Training Dynamics for a Linear Model}
Suppose now that we wish to learn a matrix $\mW$ which converges to the action of a solution operator $\mA$.
To determine the expected $\mA$, the Green's function needs to be discretized relative to the discrete measurement space.
The continuous forcing function can be constructed by $f(x) = \sum_i \vf_i \psi_i(x)$ where $\psi_i$ is a basis of the grid discretization.
The discrete solution operator is thus,
\begin{equation}
  \label{eq:A}
\vu_i =  \sum_j \left[ \int_0^1 G(x_i,s) \psi_j(s) \mathrm{d}s \right] \vf_j
  \end{equation}
where the matrix in brackets defines $\mA$.
The MSE loss over the dataset for this model is
\begin{equation}\label{eqn:mse}
\mathcal L = \frac{1}{N} \sum_{n=1}^N  \left\| \mW \vf^{(n)} - \vu^{(n)} \right\|^2_2
\end{equation}

More generally, suppose the data is sampled uniformly over subspaces of $\mathbb{R}^{n}$ such that $\mB$ is the matrix of rank $p+1$ of coefficients satisfying $f(x_i) = \vf_i = \mB \vc$.
For example, a polynomial function space can use the Vandermonde matrix $\mB_{ij}=[1, x_i,x_i^2...x_i^p]$.
By (\ref{eq:A}), there exists a matrix $\mA$ such that all training and test data satisfies $\vu = \mA\vf$.
The following result shows that out of distribution generalizations \emph{cannot} be expected: 
\begin{theorem}\label{thm:linear-case}
Applying gradient descent on training dataset
$\{\vu^{(n)}, \vf^{(n)}\}$
sampled from a function space $\mathcal{F}_{\text{train}}$, the model weights $\mW$ will converge to projection of the true operator $\mA$ onto the subspace of $\mathcal{F}_{\text{train}}$.
Let the forcing functions be sampled by $\vf^{(n)}= \mB \vc^{(n)}$ where each component of $\vc$ is sampled independently with $\mathbb{E}[\vc_i] = 0$.
Optimizing the loss (\ref{eqn:mse}) using gradient descent with an initial condition $\mW^0$ will converge to 
\begin{equation}\label{eq:thm1}
\mW^* =  \mA \mU\mU^T + \mW^0 (\mI - \mU \mU^T).
\end{equation}
where $\mU$ is the left orthogonal basis of $\mB$ of shape $N_{\text{grid}}\times (p+1)$.
\end{theorem}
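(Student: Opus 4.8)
The plan is to turn gradient descent on the quadratic loss~(\ref{eqn:mse}) into a linear recursion for the error matrix $\mE^t := \mW^t - \mA$, and then read off the limit from the spectrum of the data covariance. The crucial input is Eq.~(\ref{eq:A}): every sample satisfies $\vu^{(n)} = \mA\vf^{(n)}$, so the residual is $\mW\vf^{(n)} - \vu^{(n)} = (\mW - \mA)\vf^{(n)}$ and the gradient of~(\ref{eqn:mse}) collapses to $\nabla_\mW\Ls = (\mW - \mA)\,\hat\mSigma$, where $\hat\mSigma := \tfrac1N\sum_{n=1}^N \vf^{(n)}\vf^{(n)T}$ is the empirical second-moment matrix of the forcing data. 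Hence the update $\mW^{t+1} = \mW^t - \eta\,\nabla_\mW\Ls$ has the closed form $\mE^{t+1} = \mE^t(\mI - \eta\hat\mSigma)$, i.e. $\mE^t = \mE^0(\mI - \eta\hat\mSigma)^t$.

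Next I would pin down $\hat\mSigma$. Writing $\vf^{(n)} = \mB\vc^{(n)}$ gives $\hat\mSigma = \mB\big(\tfrac1N\sum_n \vc^{(n)}\vc^{(n)T}\big)\mB^T$; since the coordinates of $\vc$ are independent and mean-zero with positive variance, the inner Gram matrix is positive definite once $N \ge p+1$ (almost surely), or equals the diagonal covariance $\mSigma_c \succ 0$ in the population limit. Thus $\hat\mSigma$ is symmetric positive semidefinite with $\mathrm{range}(\hat\mSigma) = \mathrm{col}(\mB) = \mathrm{col}(\mU)$, and admits a spectral decomposition $\hat\mSigma = \sum_{i=1}^{p+1}\lambda_i\,\vu_i\vu_i^T$ with all $\lambda_i > 0$ and $\{\vu_i\}$ an orthonormal basis of $\mathrm{col}(\mB)$. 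Because $\mU\mU^T$ is the orthogonal projector onto $\mathrm{col}(\mB)$, it is independent of which orthonormal basis is used, so I may take the $\vu_i$ to be the columns of $\mU$; equivalently $\mI - \eta\hat\mSigma = (\mI - \mU\mU^T) + \mU\,\mathrm{diag}(1-\eta\lambda_i)\,\mU^T$.

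Finally I would take $t\to\infty$. For a step size $0 < \eta < 2/\lambda_{\max}(\hat\mSigma)$ the factors $1-\eta\lambda_i \in (-1,1)$, so $(\mI - \eta\hat\mSigma)^t \to \mI - \mU\mU^T$. Therefore $\mE^t \to \mE^0(\mI - \mU\mU^T)$, i.e. $\mW^* - \mA = (\mW^0 - \mA)(\mI - \mU\mU^T)$, which rearranges to $\mW^* = \mA\mU\mU^T + \mW^0(\mI - \mU\mU^T)$, exactly~(\ref{eq:thm1}). In words, $\mW^*$ reproduces $\mA$ on the subspace spanned by the training forcing functions and retains its initialization on the orthogonal complement, which is precisely the ``projection onto $\mathcal{F}_{\text{train}}$'' claim and explains why out-of-distribution error is unbounded.

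The main obstacle is the bookkeeping between the finite sample and the deterministic statement: one must verify that the realized coefficient vectors span $\R^{p+1}$ (true almost surely for $N \ge p+1$ under the stated independence and nonzero-variance assumptions), since otherwise $\mU$ should be replaced by an orthonormal basis of the smaller realized subspace. A secondary point the theorem suppresses is the step-size condition — the argument needs $\eta$ below the stability threshold $2/\lambda_{\max}(\hat\mSigma)$ (or a suitably decaying schedule) to get convergence rather than oscillation — and it is worth remarking explicitly that $\nabla_\mW\Ls$ has zero component along $\mathrm{col}(\mU)^\perp$ at every iteration, which is the mechanistic reason the OOD part of $\mW$ is never touched.
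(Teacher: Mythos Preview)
Your proof is correct and follows the same route as the paper's: write the gradient of the quadratic loss as $(\mW-\mA)$ times the data second-moment matrix, obtain the linear recursion $\mE^{t+1}=\mE^t(\mI-\eta\hat\mSigma)$, and read off the limit from the spectrum of $\hat\mSigma$. Your direct spectral argument for $(\mI-\eta\hat\mSigma)^t\to\mI-\mU\mU^T$ is in fact a bit cleaner than the paper's parallel/perpendicular decomposition of $\mE_t=\mW_t-\mW^*$ and sidesteps the paper's simplifying assumption that the singular values of $\mB$ are identical; you are also more explicit about the step-size threshold $\eta<2/\lambda_{\max}(\hat\mSigma)$ and the finite-sample caveat.
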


The proof is in App.~\ref{app:proof_linear}, and empirically verified in \cref{sec:empirical-verification}.
The learned matrix will converge to the projection of the true operator $\mA$ onto the subspace of data $F$, $\mA \mU \mU^T$, plus an initial error/noise that is orthogonal to the subspace of the data.
This result is irrespective of the amount of data and the discretization, and as such, is actually a quite pessimistic result.
In particular, $\mA$ is only learned iff the dimension of the subspace of $\vf$ is of equal rank with $\mA$.
The finer the discretization, the greater the ``quality'' of the data needed (e.g., the dimension of the subspace).

\section{Experiments}

\def \trimx {0}
\def \trimy {0}
\def \heatmapwidth {0.4\textwidth}
\begin{figure*}[!htbp]
    \centering
    \begin{subfigure}[b]{\heatmapwidth}
    \includegraphics[width=\linewidth, trim=0 0 0 \trimy,clip]{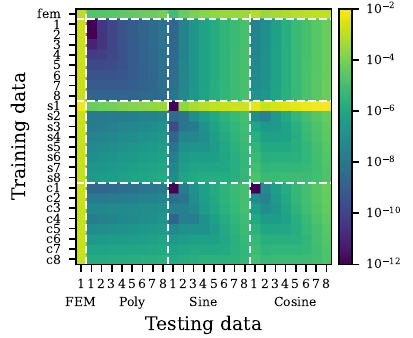}
    \caption{\label{fig:error:prior}FD: $w [\mathtt{FD2}(\vu,\Delta x^2)] = \vf$}
    \end{subfigure}
    \begin{subfigure}[b]{\heatmapwidth}
    \includegraphics[width=\linewidth, trim=0 0 0 \trimy,clip]{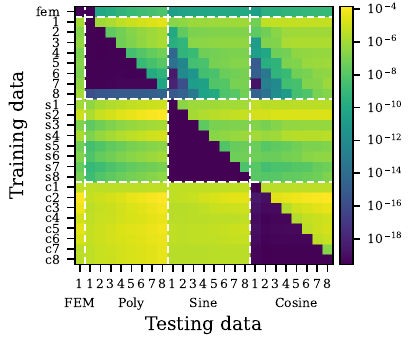}
    \caption{\label{fig:error:linear}Linear: $\vu=\mW \vf$}
    \end{subfigure} \\
    \begin{subfigure}[b]{\heatmapwidth}
    \includegraphics[width=\linewidth, trim=0 0 0 \trimy,clip]{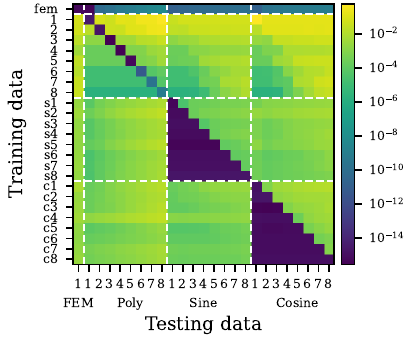}
    \caption{\label{fig:error:deeplin}Deep Linear: $\vu=\mW_2 (\mW_1 \vf + \vb_1) + \vb_2$} 
    \end{subfigure}
    \begin{subfigure}[b]{\heatmapwidth}
    \includegraphics[width=\linewidth, trim=0 0 0 \trimy,clip]{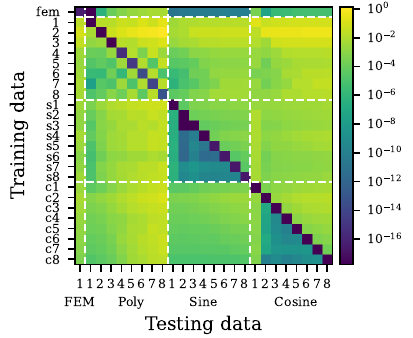}
    \caption{\label{fig:error:mlp}MLP: $\vu=\mW_2 \sigma(\mW_1 \vf + \vb_1) + \vb_2$} 
    \end{subfigure}

    \caption{\label{fig:cross_error}
    Cross-evaluation error heatmaps for four model types. Rows: training set; columns: test set. Dashed lines separate function class families; cells left of each cell are subspaces. Color shows log-scale MSE (sharp transitions $\approx \times10^{10}$). The blue lower triangular blocks in \ref{fig:error:linear} and \ref{fig:error:deeplin} indicate generalization with $\mathcal{L}(w,\mathcal{F}_{test})\lesssim \mathcal{L}(w,\mathcal{F}_{train})$ when $\mathcal{F}_{\text{test}}\subseteq \mathcal{F}_{\text{train}}$.
    }
    \end{figure*}

We examine eight different types of ML models in different settings: parameter-fitting inverse problems where the DE is known (finite difference, PINN), black-box models (linear, deep linear, multilayer perceptron), SciML black-box models (DeepONet, Neural Operator), and physics-informed learning models (Physics-Informed DeepONet). 

A total of 25 different datasets are constructed using the function spaces described in Section~\ref{sec:dataset_construction}: one piecewise linear dataset (FEM) and $p\in[1,8]$ for polynomial, sine, and cosine spaces.
For each choice of $(type, p, N_{\text{grid}})$, we generate 10,000 examples to represent the infinite data limit.
Each dataset is normalized to have unit norm in $u$ to (1) minimize the effects of numerical ill-conditioning and round-off errors in the $p\ge 5$ datasets with extreme magnitudes, and (2) make the MSE comparable across datasets.

Our goal for the experiments is threefold: to show empirically that the theoretical
a priori estimates from \cref{sec:theory} hold (Sec.~\ref{sec:empirical-verification});
to extend the results to architectures and problems where we do not have theory
(Sec.~\ref{sec:ood}); and to answer why Fig.~\ref{fig:illustration} shows a weight
matrix that looks nearly right yet inverts incorrectly (Sec.~\ref{sec:greens}).
The results are replicated in the presence of noise and on different linear PDEs.
App.~\ref{app:hyper} details software, hardware, and hyperparameters, and App.~\ref{app:grid} contains additional results on grid size variation.

\subsection{\label{sec:empirical-verification}Empirical Verification of Theoretical Predictions}
The theoretical results of \cref{sec:theory} can be used to compute predicted optimal parameters, allowing for a priori estimation of errors.

\paragraph{Linear model.}
Theorem~\ref{thm:linear-case} predicts that the converged weights are $\mW^* = \mA\mU\mU^T$ when $\mW^0 = 0$, so the predicted MSE floor on the training distribution is the basis-projection residual $\|\mA - \mA\mU\mU^T\|_F^2$, computed from the SVD of the basis-evaluation matrix as described in App.~\ref{app:thmA:tight}.
The prediction matches observations nearly to machine precision for low $p$, and only diverges once $p$ is large enough that the basis matrix $\mB$ becomes ill-conditioned (Fig.~\ref{fig:theory_error_one_line}, left).

\paragraph{Finite-difference model.}
Theorem~\ref{thm:fg} gives a closed form for $w$ once the truncation coefficients of the stencil are fixed (App.~\ref{app:fd2}), producing the predicted curves overlaid on the measured errors in Fig.~\ref{fig:theory_error_one_line} (center, right).
The prediction reproduces the empirical \emph{slope} in $\Delta x$ and $p$ but overestimates the absolute magnitude: the bound in Theorem~\ref{thm:fg} treats the high-order truncation coefficients $\mu_m$ as worst-case constants, whereas the actual stencil error is typically smaller.

\paragraph{Extension to PINNs.}
The same mechanism appears when the explicit finite-difference stencil is replaced by a PINN with a PDE loss, implemented via the DeepXDE library \citep{lu2021deepxde}.
While vanilla PINNs are used as numerical solvers, the PDE loss can be coupled with unknown parameters to solve an inverse problem:
\begin{equation}
\min_{\theta,w} \sum_{x\in grid} \left(w \frac{d^2}{dx^2} N(\theta,x_i) + f(x_i)\right)^2 + \left(N(\theta, x_i) - u_i\right)^2
\end{equation}
where $\theta$ and $w$ are learned simultaneously with $N(\theta,x_i)\approx u(x_i)$.
Fig.~\ref{fig:learned_parameter_with_pinn} shows the parameter error $|w-k|/|k|$ grows with the polynomial degree of the training data, matching the finite-difference behavior by the same mechanism as Theorem~\ref{thm:fg}.

\begin{figure*}[!htbp]
  \centering
  \begin{subfigure}[b]{\heatmapwidth}
  \includegraphics[width=\linewidth, trim=0 0 0 0,clip]{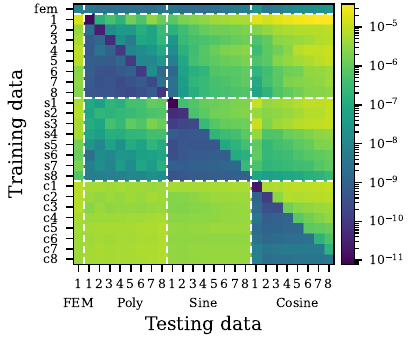}
  \caption{\label{fig:error:deeponet}DeepONet:$u(x) = trunk(\vf)\cdot branch(x)$}
  \end{subfigure}
  \begin{subfigure}[b]{\heatmapwidth}
  \includegraphics[width=\linewidth, trim=0 0 0 0,clip]{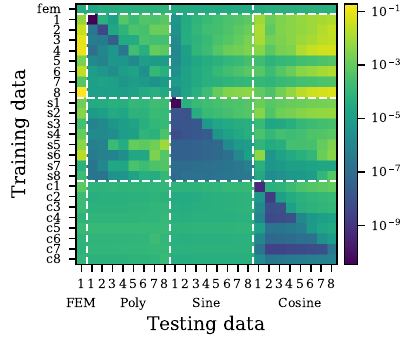}
  \caption{\label{fig:error:neuralop} Neural Op.:$u(x)=\mathtt{fft}^{-1}(R_\phi \cdot \mathtt{fft}(\vf))$}
  \end{subfigure}
  \caption{\label{fig:cross_error_deeponet_neuralop}Generalization to out-of-distribution test datasets for the DeepONet and Fourier Neural Operator. The generalization patterns are similar to the Deep Linear models in Fig.~\ref{fig:cross_error}.
  }
  \end{figure*}

  \begin{figure}[!htbp]
    \centering
    \includegraphics[width=\heatmapwidth, trim=0 0 0 0,clip]{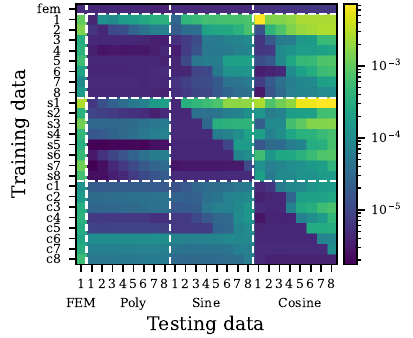}
    \caption{\label{fig:pi_deeponet}The generalization pattern for the Physics-Informed DeepONet is similar to the Deep ONet in Fig.~\ref{fig:cross_error_deeponet_neuralop}, but with a higher baseline error on the training data.
    }
    \end{figure}
\subsection{\label{sec:ood}OOD Dataset Generalization}

Theorems~\ref{thm:fg} and~\ref{thm:linear-case} sharply characterize two specific parameter-learning settings. We now ask whether the same function-space dependence persists in operator-learning models for which we have no analogous theory.

We train models from scratch on datasets with $N_{\text{grid}}=22$ using each of the 25 different datasets.
After training, each model is evaluated on the 24 other datasets unseen during training.
This results in a $25\times25$ grid of MSEs for each model, shown in Figs.~\ref{fig:cross_error}--\ref{fig:other_pdes} as heatmaps.
The model with the best training MSE from 5 runs is chosen to plot the corresponding row in the heatmap.
Error bars over multiple seeds are shown in App.~\ref{sec:repeatability}.

\paragraph{Finite Difference Model}
As predicted by Theorem~\ref{thm:fg}, the test error observed in Fig.~\ref{fig:error:prior} increases both when testing on higher-order functions (moving right in the heatmap) and when training on higher-order functions (moving down in the heatmap).
The FEM dataset error is orders of magnitude higher as it violates smoothness requirements for finite differences.
See Fig.~\ref{fig:sindy_error} in App.~\ref{app:grid} for further analysis on $\Delta x$ terms.

\paragraph{Linear Model}
We turn to Theorem~\ref{thm:linear-case} to empirically verify its predictions of $\mW$ in Fig.~\ref{fig:error:linear}.
We initialize $W_0=0$ and do not use regularization. For each training run, the test error rises sharply when the test data are no longer in a subspace of the training data.
There are three clear lower-triangular blocks in the heatmap representing the three families of nested subspaces, providing evidence for generalization across function subspaces.
Increasing polynomial orders decrease the MSE on the cosine and sine datasets (forming pyramid structures in the top middle and right.)
The FEM piecewise linear function space along the top row shows consistently low MSE in all models.

\paragraph{Deep Linear Models and Shallow Neural Networks}
We train two basic nonlinear models: a deep linear model and a shallow neural network.
As seen in Figs.~\ref{fig:error:deeplin} and \ref{fig:error:mlp}, neither model exhibits generalization behavior consistently.
The MLP is strongly diagonal, showing that the model will overfit to the training data, and not generalize in any case.
The Deep Linear model shows mixed behavior: it exhibits subspace generalization on the sine and cosine datasets but not on the polynomial datasets.
As with the linear model, some cross-subspace generalization is observed when the models are trained with piecewise linear functions.

\paragraph{DeepONet and Neural Operators}

We investigate two models that are designed for learning PDEs: the DeepONet \cite{lu2021learning} and the Fourier Neural Operator \cite{kovachki2021neural}. The results are displayed in Fig.~\ref{fig:cross_error_deeponet_neuralop}.
Both models support flexible $x$-coordinate distributions; testing in our experiments is still performed on the regular grid for consistency.
The DeepONet shows block lower-triangular structure similar to that of the linear model.
The diagonals (evaluation on the training distribution) exhibit lower error values, showing a slight overfitting that does not strictly satisfy the subspace generalization.
The dips in the trend are clear in Fig.~\ref{fig:error_bars_deeponet} in App.~\ref{sec:repeatability}, where the model could satisfy $\mathcal{L}_{test}\lesssim\mathcal{L}_{train}$ tolerance depending on the application.
The Neural Operator has a similar generalization pattern, but did not achieve a low training MSE on all function classes.

\paragraph{Physics-Informed DeepONet}

We implement a PI-DeepONet following \cite{wang2021learning} (Fig.~\ref{fig:pi_deeponet}), which uses the loss:
\begin{multline}
\min_{\theta} \sum_{f\in data} \sum_{x\in grid}\left(-k \frac{d^2}{dx^2} N(\theta,\mathbf{f},x_i) - f(x_i)\right)^2 \\+ \sum_{x\in boundary}(\text{BC Loss}(N(\theta,\mathbf{f},x_i)))^2
\end{multline}
where $N(\cdot)$ is the factorized DeepONet that takes the observation point $x_i$ and the forcing function values $\mathbf{f}$ as inputs. Boundary conditions are enforced through an additional loss term.
The baseline error is larger than the other models, achieving only $10^{-6}$ error.
The subspace generalization pattern is still evident, although less pronounced due to the raised floor of the error, showing that physics informed losses do not mitigate the subspace limitation.

\paragraph{Effect of Noise}

\begin{figure}[!htbp]
    \centering
    \includegraphics[width=\heatmapwidth, trim=0 0 0 0,clip]{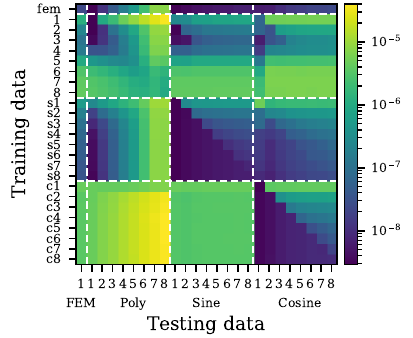}
    \caption{\label{fig:error:poisson_noise}Poisson equation with training data corrupted by noise.}
    \end{figure}

We investigate the effect of measurement noise on the generalization boundaries by adding normally distributed noise to the 1D Poisson equation training data of $u$ and $f$.
The results are displayed in Fig.~\ref{fig:error:poisson_noise}.
Rather than ``smearing'' the sharp generalization boundaries, noise raises the floor of achievable losses, achieving a lowest error of  $\approx 10^{-9}$ compared to $10^{-20}$ error, and the generalization pattern is still present.

\paragraph{2D Poisson and Biharmonic Equations}

\begin{figure*}[!htbp]
    \centering
    \begin{subfigure}[b]{\heatmapwidth}
    \includegraphics[width=\linewidth, trim=0 0 0 0,clip]{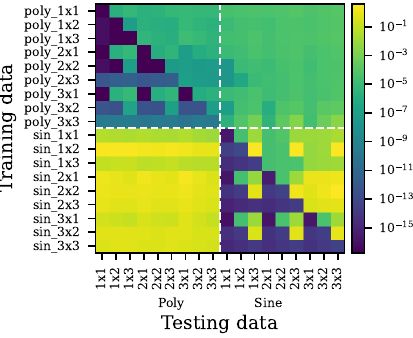}
    \caption{\label{fig:error:poisson2d}2D Poisson Equation}
    \end{subfigure}
    \begin{subfigure}[b]{\heatmapwidth}
    \includegraphics[width=\linewidth, trim=0 0 0 0,clip]{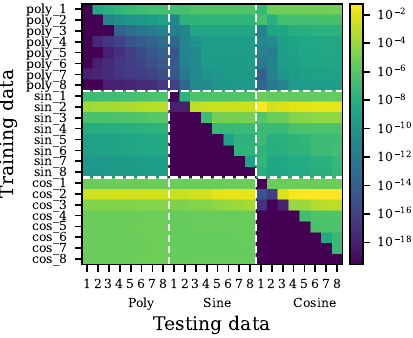}
    \caption{\label{fig:error:biharmonic}Biharmonic Equation}
    \end{subfigure}
    \caption{\label{fig:other_pdes}Generalization to out-of-distribution test datasets constructed for the 2D Poisson and Biharmonic equations learned with a linear model.
    }
    \end{figure*}
We repeat the same experiments for the 1D biharmonic and 2D Poisson equations using a linear model, shown in Fig.~\ref{fig:other_pdes}.
The biharmonic equation, $-k \frac{d^4 u}{dx^4} = f$, displays the same block lower-triangular structure as the 1D case. 
Solutions for the biharmonic are constructed the same way as for the 1D Poisson equation.
For the 2D Poisson equation, forcing functions are constructed using independent basis functions in $x$ and $y$, $f(x,y) = \sum_{i=1}^p c_i \sin(\pi i x) \sum_{j=1}^q d_j\sin(\pi j y)$.
This exhibits subspace generalization independently in each spatial direction, producing the Sierpiński-triangle-esque pattern.
This shows that the results hold generally for linear PDEs, as implied by Theorem~\ref{thm:linear-case}.

\subsection{\label{sec:greens}Extracting {Green's} Functions From Black-Box Models}

\begin{figure*}[!htbp]
\begin{center}
\includegraphics[width=\textwidth]{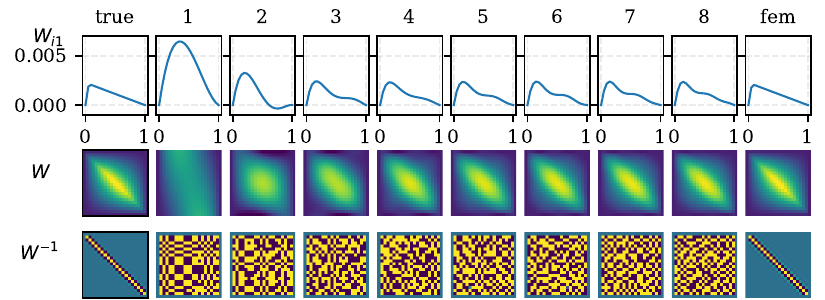}
\end{center}
\caption{
\label{fig:weight_inspect}
Visualization of the parameter matrix $\mW$ of the linear model as the dataset order increases.
The top row displays one row $W_{i1}$ of the matrix, the row vector corresponding to the impulse at point $x=\Delta x$. It can be seen to converge to the shape of a Green's function. The middle row shows the entire matrix as a heatmap, and the bottom displays its inverse.
The learned matrix converges incrementally towards the expected matrix (visualized from left to right) as terms are incrementally added to the training dataset. When the model is trained on the FEM dataset, it is possible to invert the learned $\mW$ and yield a banded structure of a local stencil. Parameters learned on all other dataset types cannot be inverted reliably.
}
\end{figure*}

\begin{figure*}[!htbp]
\begin{center}
\includegraphics[width=\linewidth]{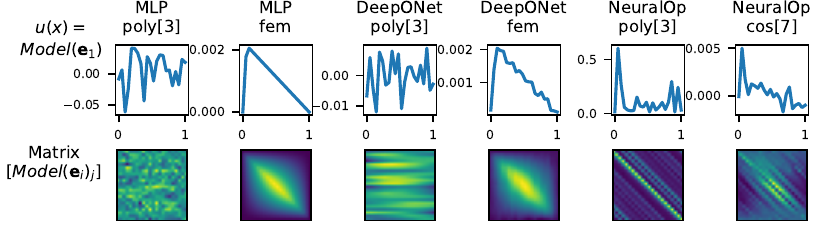}
\end{center}
\caption{
\label{fig:mlp_greens}
Visualization of test function evaluations, $\text{Model}(\ve_i)$, on black-box models can reveal Green's function structure (bottom row). 
The top row shows one test function evaluation at the node $x=\Delta x$, and the bottom row shows a matrix constructed from model evaluations for all test functions.
The colormap scale matches the y-scale of the corresponding plot above them.}
\end{figure*}

The Green's function is defined as the impulse response of the solution operator.
Given a trained model, the action of the Green's function can be approximated by applying a test function $\hat{f}_k=\delta_{kj}=\ve_j$ (the one-hot vector at index $j$) to extract the structure using the relation
\begin{equation}
    \mA_{ij}\leftrightarrow Model(\vf=\ve_j)_i.
\end{equation}
For the linear model, this is the same as inspecting the weight matrix, $ \mA_{ij}\leftrightarrow\mW_{ij}$.
Fig.~\ref{fig:weight_inspect} elaborates on Fig.~\ref{fig:illustration} to show the convergence of the weight matrix to the derived analytical solution for increasing modes (or complexity) of data. 
For lower orders $p$, the extracted functions are inscrutable, even though the model achieves low error on the training set.
Training the linear model on the FEM dataset is sufficient to recover an accurate Green's operator $\mA$. It is even possible to invert the weight matrix ($\mW^{-1}=\hat{\mL}$) to uncover a local stencil structure, which is evident in its tridiagonal appearance.

Fig.~\ref{fig:mlp_greens} shows that it is also possible to extract the Green's functions for deep models by evaluating the nonlinear models on one-hot inputs $\vf=\textbf{e}_j$. For $p=3$, there is no apparent structure, even though the model achieves low MSE on the training set. The expected structure is extracted using the MLP and DeepONet when they are trained on FEM data. The structure is somewhat evident for the Fourier Neural Operator with the $cos[7]$ training run, but no others.

\section{Conclusion}
Learning spatially dependent problems is limited by the underlying function space of the training data and the observation discretization.
We theoretically proved that this barrier exists on the simplest problem --- the Poisson problem --- on parameter fitting finite differences and linear models.
We illustrated that carefully choosing the training data can mitigate this issue and that it is possible to generalize to other function spaces, which can inform data collection and cross-validation techniques.
As another form of validation, black-box models can be directly interpreted as integrated Green's functions when they generalize effectively across many test domains.
However, this is still pessimistic and presents a headwind for discovering unknown physics from real world data, given the inherent limitations in real world data collection.
Confoundingly, different types of ML models can exhibit different and sometimes opposing generalization behaviors.

We demonstrated this behavior across a wide range of representative model types, from simple parameter fits to models designed for PDE learning.
Physics-informed losses which incorporate prior knowledge (e.g., PINNs and Physics-Informed DeepONets) suffer the same subspace limitation as pure black-box models.
Our analysis suggests that even the strongest physical priors may not mitigate the impact of the training data distribution.

While our theory required a priori knowledge of how the dataset was constructed, our results suggest that extracting generalized scientific knowledge may require new methods capable of compensating for these limitations by discovering the underlying subspace of the data.
We propose this methodology to be an evaluation benchmark on future development of learning physics.
As we have seen that even linear DEs can pose a challenge, we recommend that future proposals for DE learning techniques measure cross-set generalization.



\section{Limitations}
Our theoretical analysis is restricted to linear PDEs; extending the convergence results to nonlinear settings requires additional machinery and is left for future work.
The cross-validation methodology is directly applicable beyond the linear setting, but the experimental validation here covers only the 1D Poisson, biharmonic, and 2D Poisson equations.
We do not propose a remedy for the identified generalization failure --- the goal of this work is to characterize and measure the problem, not to solve it.

\section*{Impact Statement} 
This paper analyzes limitations in discovering underlying physics from data across an entire spectrum of modeling approaches, with implications for evaluation and validation in scientific ML. These findings are relevant for applications in physics and engineering, where understanding out-of-distribution behavior is important for safe real-world deployment. By providing rigorous criteria for when a model has genuinely learned the underlying physics versus merely interpolated within its training subspace, this work could help practitioners avoid costly retraining cycles and unnecessary large-scale compute expenditure. The cross-validation methodology introduced here is lightweight to apply and could serve as an inexpensive early diagnostic before committing to expensive training runs or deployment. The work does not introduce new capabilities with direct societal risk.

\section*{Acknowledgments} 
This paper describes objective technical results and analysis. Any subjective views or opinions that might be expressed in the paper do not necessarily represent the views of the U.S. Department of Energy or the United States Government.

This article has been authored by an employee of National Technology \& Engineering Solutions of Sandia, LLC under Contract No. DE-NA0003525 with the U.S. Department of Energy (DOE). The employee owns all right, title and interest in and to the article and is solely responsible for its contents. The United States Government retains and the publisher, by accepting the article for publication, acknowledges that the United States Government retains a non-exclusive, paid-up, irrevocable, world-wide license to publish or reproduce the published form of this article or allow others to do so, for United States Government purposes. The DOE will provide public access to these results of federally sponsored research in accordance with the DOE Public Access Plan \url{https://www.energy.gov/downloads/doe-public-access-plan}. The work performed at Sandia National Laboratories was supported by the U.S. Department of Energy, Office of Science, Office of Advanced Scientific Computing Research, DyGenAI project, and the SEA-CROGS project in the MMICCs program. Additional support was received from Interlab
 Laboratory Directed Research and Development program at Sandia.

\bibliography{iclr2025_conference}

@inproceedings{caron2021emerging,
  title={Emerging properties in self-supervised vision transformers},
  author={Caron, Mathilde and Touvron, Hugo and Misra, Ishan and J{\'e}gou, Herv{\'e} and Mairal, Julien and Bojanowski, Piotr and Joulin, Armand},
  booktitle={Proceedings of the IEEE/CVF international conference on computer vision},
  pages={9650--9660},
  year={2021}
}

@article{cao2021choose,
  title={Choose a transformer: {F}ourier or {G}alerkin},
  author={Cao, Shuhao},
  journal={Advances in neural information processing systems},
  volume={34},
  pages={24924--24940},
  year={2021}
}

@inproceedings{yu2024nonlocal,
  title={{Nonlocal Attention Operator}: Materializing Hidden Knowledge Towards Interpretable Physics Discovery},
  author={Yu, Yue and Liu, Ning and Lu, Fei and Gao, Tian and Jafarzadeh, Siavash and Silling, Stewart},
  booktitle={Advances in Neural Information Processing Systems},
  volume={37},
  year={2024}
}

@article{queiruga2020continuous,
  title={Continuous-in-depth neural networks},
  author={Queiruga, Alejandro F and Erichson, N Benjamin and Taylor, Dane and Mahoney, Michael W},
  journal={arXiv preprint arXiv:2008.02389},
  year={2020}
}

@article{krishnapriyan2023learning,
  title={Learning continuous models for continuous physics},
  author={Krishnapriyan, Aditi S and Queiruga, Alejandro F and Erichson, N Benjamin and Mahoney, Michael W},
  journal={Communications Physics},
  volume={6},
  number={1},
  pages={319},
  year={2023},
  publisher={Nature Publishing Group UK London}
}

@article{krishnapriyan2021characterizing,
  title={Characterizing possible failure modes in physics-informed neural networks},
  author={Krishnapriyan, Aditi and Gholami, Amir and Zhe, Shandian and Kirby, Robert and Mahoney, Michael W},
  journal={Advances in neural information processing systems},
  volume={34},
  pages={26548--26560},
  year={2021}
}

@article{sakarvadia2025false,
  title={The false promise of zero-shot super-resolution in machine-learned operators},
  author={Sakarvadia, Mansi and Hegazy, Kareem and Totounferoush, Amin and Chard, Kyle and Yang, Yaoqing and Foster, Ian and Mahoney, Michael W},
  journal={arXiv preprint arXiv:2510.06646},
  year={2025}
}

@inproceedings{ott2021resnet,
  title={{ResNet} After All: Neural {ODE}s and Their Numerical Solution},
  author={Ott, Katharina and Katiyar, Prateek and Hennig, Philipp and Tiemann, Michael},
  booktitle={9th International Conference on Learning Representations (ICLR)},
  year={2021}
}

@inproceedings{zhu2022numerical,
  title={On numerical integration in neural ordinary differential equations},
  author={Zhu, Aiqing and Jin, Pengzhan and Zhu, Beibei and Tang, Yifa},
  booktitle={International Conference on Machine Learning},
  pages={27527--27547},
  year={2022},
  organization={PMLR}
}

@inproceedings{NEURIPS2022_ecc38927,
 author = {Sander, Michael and Ablin, Pierre and Peyr\'{e}, Gabriel},
 booktitle = {Advances in Neural Information Processing Systems},
 editor = {S. Koyejo and S. Mohamed and A. Agarwal and D. Belgrave and K. Cho and A. Oh},
 pages = {36520--36532},
 publisher = {Curran Associates, Inc.},
 title = {Do Residual Neural Networks discretize Neural Ordinary Differential Equations?},
 volume = {35},
 year = {2022}
}

@InProceedings{hansen2023learning,
  title = 	 {Learning Physical Models that Can Respect Conservation Laws},
  author =       {Hansen, Derek and Maddix, Danielle C. and Alizadeh, Shima and Gupta, Gaurav and Mahoney, Michael W.},
  booktitle = 	 {Proceedings of the 40th International Conference on Machine Learning},
  pages = 	 {12469--12510},
  year = 	 {2023},
  editor = 	 {Krause, Andreas and Brunskill, Emma and Cho, Kyunghyun and Engelhardt, Barbara and Sabato, Sivan and Scarlett, Jonathan},
  volume = 	 {202},
  series = 	 {Proceedings of Machine Learning Research},
  month = 	 {23--29 Jul},
  publisher =    {PMLR},
  pdf = 	 {https://proceedings.mlr.press/v202/hansen23b/hansen23b.pdf},
  url = 	 {https://proceedings.mlr.press/v202/hansen23b.html},
  abstract = 	 {Recent work in scientific machine learning (SciML) has focused on incorporating partial differential equation (PDE) information into the learning process. Much of this work has focused on relatively "easy” PDE operators (e.g., elliptic and parabolic), with less emphasis on relatively “hard” PDE operators (e.g., hyperbolic). Within numerical PDEs, the latter problem class requires control of a type of volume element or conservation constraint, which is known to be challenging. Delivering on the promise of SciML requires seamlessly incorporating both types of problems into the learning process. To address this issue, we propose ProbConserv, a framework for incorporating constraints into a generic SciML architecture. To do so, ProbConserv combines the integral form of a conservation law with a Bayesian update. We provide a detailed analysis of ProbConserv on learning with the Generalized Porous Medium Equation (GPME), a widely-applicable parameterized family of PDEs that illustrates the qualitative properties of both easier and harder PDEs. ProbConserv is effective for easy GPME variants, performing well with state-of-the-art competitors; and for harder GPME variants it outperforms other approaches that do not guarantee volume conservation. ProbConserv seamlessly enforces physical conservation constraints, maintains probabilistic uncertainty quantification (UQ), and deals well with shocks and heteroscedasticity. In each case, it achieves superior predictive performance on downstream tasks.}
}

@article{rudy2017data,
  title={Data-driven discovery of partial differential equations},
  author={Rudy, Samuel H and Brunton, Steven L and Proctor, Joshua L and Kutz, J Nathan},
  journal={Science Advances},
  volume={3},
  number={4},
  pages={e1602614},
  year={2017},
  publisher={American Association for the Advancement of Science}
}

@article{boulle2023elliptic,
  title={Elliptic {PDE} learning is provably data-efficient},
  author={Boull{\'e}, Nicolas and Halikias, Diana and Townsend, Alex},
  journal={Proceedings of the National Academy of Sciences},
  volume={120},
  number={39},
  pages={e2303904120},
  year={2023},
  publisher={National Acad Sciences}
}

@article{udrescu2020ai,
  title={{AI} {F}eynman: A physics-inspired method for symbolic regression},
  author={Udrescu, Silviu-Marian and Tegmark, Max},
  journal={Science Advances},
  volume={6},
  number={16},
  pages={eaay2631},
  year={2020},
  publisher={American Association for the Advancement of Science}
}

@article{nguyen2022fourierformer,
  title={{FourierFormer}: Transformer meets generalized {F}ourier integral theorem},
  author={Nguyen, Tan and Pham, Minh and Nguyen, Tam and Nguyen, Khai and Osher, Stanley and Ho, Nhat},
  journal={Advances in Neural Information Processing Systems},
  volume={35},
  pages={29319--29335},
  year={2022}
}

@article{raissi2019physics,
  title={Physics-informed neural networks: A deep learning framework for solving forward and inverse problems involving nonlinear partial differential equations},
  author={Raissi, Maziar and Perdikaris, Paris and Karniadakis, George E},
  journal={Journal of Computational Physics},
  volume={378},
  pages={686--707},
  year={2019},
  publisher={Elsevier}
}

@article{trask2022enforcing,
  title={Enforcing exact physics in scientific machine learning: a data-driven exterior calculus on graphs},
  author={Trask, Nathaniel and Huang, Andy and Hu, Xiaozhe},
  journal={Journal of Computational Physics},
  volume={456},
  pages={110969},
  year={2022},
  publisher={Elsevier}
}

@article{patel2022thermodynamically,
  title={Thermodynamically consistent physics-informed neural networks for hyperbolic systems},
  author={Patel, Ravi G and Manickam, Indu and Trask, Nathaniel A and Wood, Mitchell A and Lee, Myoungkyu and Tomas, Ignacio and Cyr, Eric C},
  journal={Journal of Computational Physics},
  volume={449},
  pages={110754},
  year={2022},
  publisher={Elsevier}
}

@article{boulle2022data,
  title={Data-driven discovery of Green’s functions with human-understandable deep learning},
  author={Boull{\'e}, Nicolas and Earls, Christopher J and Townsend, Alex},
  journal={Scientific Reports},
  volume={12},
  number={1},
  pages={4824},
  year={2022},
  publisher={Nature Publishing Group UK London}
}

@article{gin2021deepgreen,
  title={{DeepGreen}: deep learning of {Green’s} functions for nonlinear boundary value problems},
  author={Gin, Craig R and Shea, Daniel E and Brunton, Steven L and Kutz, J Nathan},
  journal={Scientific Reports},
  volume={11},
  number={1},
  pages={21614},
  year={2021},
  publisher={Nature Publishing Group UK London}
}

@article{Kaptanoglu2022,
doi = {10.21105/joss.03994},
url = {https://doi.org/10.21105/joss.03994},
year = {2022},
publisher = {The Open Journal},
volume = {7},
number = {69},
pages = {3994},
author = {Alan A. Kaptanoglu and Brian M. de Silva and Urban Fasel and Kadierdan Kaheman and Andy J. Goldschmidt and Jared Callaham and Charles B. Delahunt and Zachary G. Nicolaou and Kathleen Champion and Jean-Christophe Loiseau and J. Nathan Kutz and Steven L. Brunton},
title = {{PySINDy}: A comprehensive {Python} package for robust sparse system identification},
journal = {Journal of Open Source Software}
}

@inproceedings{
nanda2023progress,
title={Progress measures for grokking via mechanistic interpretability},
author={Neel Nanda and Lawrence Chan and Tom Lieberum and Jess Smith and Jacob Steinhardt},
booktitle={The Eleventh International Conference on Learning Representations },
year={2023},
url={https://openreview.net/forum?id=9XFSbDPmdW}
}

@misc{baratta_2023_10447666,
  title     = {{DOLFINx}: the next generation {FEniCS} problem solving environment},
  author    = {Baratta, Igor A. and Dean, Joseph P. and Dokken, J{\o}rgen S. and Habera, Michal and Hale, Jack S. and Richardson, Chris N. and Rognes, Marie E. and Scroggs, Matthew W. and Sime, Nathan and Wells, Garth N.},
  doi       = {10.5281/zenodo.10447666},
  year      = {2023},
  howpublished = {preprint}
}

@article{lu2021deepxde,
  author  = {Lu, Lu and Meng, Xuhui and Mao, Zhiping and Karniadakis, George Em},
  title   = {{DeepXDE}: A deep learning library for solving differential equations},
  journal = {SIAM Review},
  volume  = {63},
  number  = {1},
  pages   = {208-228},
  year    = {2021},
  doi     = {10.1137/19M1274067}
}

@article{lu2021learning,
  title={Learning nonlinear operators via {DeepONet} based on the universal approximation theorem of operators},
  author={Lu, Lu and Jin, Pengzhan and Pang, Guofei and Zhang, Zhongqiang and Karniadakis, George Em},
  journal={Nature Machine Intelligence},
  volume={3},
  number={3},
  pages={218--229},
  year={2021},
  publisher={Nature Publishing Group UK London}
}

@misc{kossaifi2024neural,
  title = {A Library for Learning Neural Operators},
  author = {Jean Kossaifi and Nikola Kovachki and Zongyi Li and Davit Pitt and Miguel Liu-Schiaffini and Robert Joseph George and Boris Bonev and Kamyar Azizzadenesheli and Julius Berner and Anima Anandkumar},
  year = {2024},
  eprint = {2412.10354},
  archivePrefix = {arXiv},
  primaryClass = {cs.LG},
}

@article{kovachki2021neural,
  author = {Nikola B. Kovachki and Zongyi Li and Burigede Liu and Kamyar Azizzadenesheli and Kaushik Bhattacharya and Andrew M. Stuart and Anima Anandkumar},
  title = {{Neural Operator}: Learning Maps Between Function Spaces With Applications to {PDEs}},
  journal = {Journal of Machine Learning Research},
  volume = {24},
  number = {89},
  pages = {1--97},
  year = {2023},
}

@article{wang2021learning,
  title={Learning the solution operator of parametric partial differential equations with physics-informed {DeepONets}},
  author={Wang, Sifan and Wang, Hanwen and Perdikaris, Paris},
  journal={Science Advances},
  volume={7},
  number={40},
  pages={eabi8605},
  year={2021},
  publisher={American Association for the Advancement of Science}
}

@article{bartolucci2023representation,
  title={Representation equivalent neural operators: a framework for alias-free operator learning},
  author={Bartolucci, Francesca and De B{\'e}zenac, Emmanuel and Raonic, Bogdan and Molinaro, Roberto and Mishra, Siddhartha and Alaifari, Rima},
  journal={Advances in Neural Information Processing Systems},
  volume={36},
  pages={69661--69672},
  year={2023}
}

@article{bahmani2025resolution,
  title={A resolution independent neural operator},
  author={Bahmani, Bahador and Goswami, Somdatta and Kevrekidis, Ioannis G and Shields, Michael D},
  journal={Computer Methods in Applied Mechanics and Engineering},
  volume={444},
  pages={118113},
  year={2025},
  publisher={Elsevier}
}
\bibliographystyle{icml2026}

\newpage
\appendix
\onecolumn

\section{Proofs}
\subsection{Proof of Theorem \ref{thm:fg}}
\label{app:proof_fd}
\begin{proof}
The training forcing function and corresponding analytical solution is given by
\begin{align}
    f^{(n)}(x)&=-k\sum_{m=0}^pc_mx^{m} \\
    u^{(n)}(x)&=\sum_{m=-2}^p\frac{c_mx^{m+2}}{(m+2)(m+1)}
\end{align}
for coefficients $c_m$.
Recall that a finite difference stencil of accuracy order $q$ will satisfy
\begin{equation}
    FD_q(u,\Delta x)=u''+\mathcal{O}(\Delta x^q ).
\end{equation}
Thus for a polynomial whose true derivative is $\sum c_mx^m$, the finite difference stencil can be written as a polynomial whose coefficients change starting at its accuracy order $q$,
\begin{align}
    FD_q(u,\Delta x)&=\sum _{m=0}^{q-1}c_mx^m  + \sum_{m=q}^pc_m(1+\zeta_m\Delta x^m)x^m \\
    & :=\sum_mb_mx^m
\end{align}
where $\zeta_m$ is a bound of the coefficient of error introduced for each term $m$ starting at $q$ by Taylor's theorem. 
These error terms are dependent upon the finite difference stencil; the following section illustrates two examples. This proof is for the general case.
For the usual 3-point stencil, $q = 2$.
Simplifying the loss 
\begin{align}
    \mathcal{L}&= \mathbb{E}_c\left[\int_0^1\left(
    w\sum_{m=0}^pb_mx^m-k\sum_{m=0}^p c_mx^m
    \right)^2dx\right] \\
    &=\mathbb{E}_c\left[\int_0^1\left(
    \sum_{m=0}^p(wb_m-kc_m)x^m
    \right)^2dx\right]
\end{align}
where $b_m=c_m$ for $m<q$ and $b_m=c_m(1+\zeta_m\Delta x^m)$ for $m\geq q$.
Expanding the square, applying the integral and simplifying using the definition of $b_m$
\begin{align}
    \mathcal{L}&=\mathbb{E}_c\left[\int_0^1\left(
    \sum_{j=0}^p
    \sum_{m=0}^p(wb_m-kc_m)(wb_j-kc_j)x^{m+j}
    \right)dx\right] \\
    &=\mathbb{E}_c\left[
    \sum_{j=0}^p
    \sum_{m=0}^p\frac{(wb_m-kc_m)(wb_j-kc_j)}{m+j+1}
    \right] \\
    &=\mathbb{E}_c\left[
    \sum_{j=0}^p
    \sum_{m=0}^p\frac{(w(1+\xi_m)-k)(w(1+\xi_j)-k)c_mc_j}{m+j+1}
    \right] \\
    &=\sum_{j=0}^p
    \sum_{m=0}^p\frac{(w(1+\xi_m)-k)(w(1+\xi_j)-k)\mathbb{E}_c\left[c_mc_j\right]}{m+j+1}
\end{align}
where we let $\xi_m:=\zeta_m\Delta x^m$.
Since $c_m$ are iid, then $\mathbb{E}[c_mc_j]=\mathbb{E}[c_m^2]\delta_{mj}$. 
This lets us only consider the diagonal terms of the sum,
\begin{align}
    \mathcal{L}&=
    \sum_{m=0}^p(w(1+\xi_m)-k)^2\frac{\mathbb{E}_c\left[c_m^2\right]}{2m+1}.
\end{align}
Taking the derivative with respect to $w$
\begin{align}
\frac{d\mathcal{L}}{dw}&=
    \sum_{m=0}^p2(1+\xi_m)(w(1+\xi_m)-k)\frac{\mathbb{E}_c\left[c_m^2\right]}{2m+1}.
\end{align}
Setting $d\mathcal{L}/dw=0$, the minima is at
\begin{align}
w &=  \frac{\sum_{m=0}^p(1+\xi_m)\frac{\mathbb{E}_c\left[c_m^2\right]}{2m+1}}{\sum_{m=0}^p(1+\xi_m)(1+\xi_m)\frac{\mathbb{E}_c\left[c_m^2\right]}{2m+1} } k.
\end{align}
By assumption that the expectations on the second moment are identical for each coefficient $c_m$, we can simplify and then break up the sum,
\begin{align}
    w &= \frac{\sum_{m=0}^p(1+\xi_m)\frac{1}{2m+1}}{\sum_{m=0}^p(1+\xi_m)(1+\xi_m)\frac{1}{2m+1} } k \\
    &= \frac{
        \sum_{m=0}^{q-1}\frac{1}{2m+1}
        +
    \sum_{m=q}^p(1+\zeta_m\Delta x^m)\frac{1}{2m+1}
    }{
    \sum_{m=0}^{q-1}\frac{1}{2m+1}
    + \sum_{m=q}^p(1+2\zeta_m\Delta x^m+\zeta_m^2\Delta x^{2m})\frac{1}{2m+1}
    } k
\end{align}
Now by algebraically using this to compute the error term, we have
\begin{align}
\frac{w -k}{k} =
    \frac{
    -\sum_{m=q}^p(\zeta_m\Delta x^m+\zeta_m^2\Delta x^{2m})\frac{1}{2m+1}
    }{
    \sum_{m=0}^{q-1}\frac{1}{2m+1}
    + \sum_{m=q}^p(1+2\zeta_m\Delta x^m+\zeta_m^2\Delta x^{2m})\frac{1}{2m+1}
    }.
\end{align}
Note that $w=k$ if $p<q$ as the numerator vanishes.
Conversely, when $p>q$, $w \to k$ as $\Delta x\rightarrow0$ at a rate of $\Delta x^q$ by polynomial division. 
Furthermore, for a fixed $\Delta x$, increasing $p$ increases the error.
Each additional term adding $p$ adds additional terms: at the very least, the term $\xi_m\Delta x^{2m}$ is always positive, so the magnitude of that term will always increase, even if $\zeta_m\Delta x^m$ has favorable canceling out.

Thus, the dominant terms are in the numerator,
\begin{align}
\frac{w -k}{k} =
    \sum_{m=q}^p(\mu_m\Delta x^m)
\end{align}
where the particular coefficients $\mu_m$ are finite constants that can be derived for particular finite difference rules.
\end{proof}

\subsection{\label{app:fd2}Concrete Example of Error for 3-Point Stencil}
For a given stencil, the above procedure can be executed algebraically to produce
the graph in Fig.~\ref{fig:theory_error_one_line} (center, right).
Plugging in the finite difference stencil into \cref{eqn:w-min},
the inner value of the expression is
\begin{equation}
\frac{w}{\Delta x^2} \Bigg( \left( \sum_{n=0}^{p+2} c_n (x-\Delta x)^n\right)
- 2 
 \left(\sum_{n=0}^{p+2} c_n x^n\right) + 
 \left(\sum_{n=0}^{p+2} c_n (x+\Delta x)^n\right)\Bigg)
- k \sum_{n=2}^{p+2} n(n-1) c_n x^{n-2}
\end{equation}
For $p=0$ and $p=1$, the equation collapses to $w=k$.
But when we consider $p=2$ with quadratic $f$ and quartic $u$, there is not full cancellation:
\begin{equation}
\mathcal{L}= \mathbb{E}_c\Bigg[\int_0^{1} \Big(
2 w \left( c_{0} + 3 c_{1} x + 6 c_{2} x^{2} + c_{4} \Delta x^{2}  \right) 
- 2 k \left(c_{0} + 3 c_{1} x + 6 c_{2} x^{2}\right) \Big)^2dx\Bigg]
\end{equation}
Following the procedure in the proof algebraically, we have
\begin{equation}
    w = \frac{3 k \left(105 \Delta^{2} + 223\right)}{245 \Delta^{4} + 630 \Delta^{2} + 669}
\end{equation}
which leads to an error term
\begin{equation}
\frac{w-k}{k} =
\frac{\Delta^{2} \left(- 245 \Delta^{2} - 315\right)}{245 \Delta^{4} + 630 \Delta^{2} + 669}
\end{equation}
which satisfies the general result and was used to plot the graph in Fig.~\ref{fig:theory_error_one_line} (center).
A similar algebraic equation can be derived for the five point stencil to yield the graph in Fig.~\ref{fig:theory_error_one_line} (right).

\subsection{Proof of Theorem \ref{thm:linear-case}}\label{app:proof_linear}

\begin{proof}
By construction of our dataset and ansatz on $\mA$, 
the $n$th sample is $\vf^{(n)}=\mB\vc^{(n)}$ and
$\vu^{(n)}=\mA\mB\vc^{(n)}$, thus the loss is
\begin{equation}
\mathcal L = \frac{1}{2N} \sum_{n=1}^N || \mA \mB \vc^{(n)} - \mW \mB\vc^{(n)}||^2
\end{equation}
where $\mA, \mW$ are of size $N_{\text{grid}}\times N_{\text{grid}}$, $\vc$ is of size $p+1$ and $\mB$ is size $N_{\text{grid}}\times (p+1)$.

Clearly $\mW=\mA$ is a solution, but if $\mB$ is lower rank than $\mA$ and $\mW$, there is a nullspace in the least squares equation. 
Gradient descent moves in the direction of the gradient of the loss
\begin{equation}
\frac{\partial \mathcal L}{\partial \mW} = -\frac{1}{N}\sum_{n=1}^N (\mA - \mW)(\mB\vc^{(n)} (\vc^{(n)})^T\mB^T).
\end{equation}
With the assumption of iid $\vc$, then $\mathbb{E}[c_m c_k] = \mathbb{E}[c_m^2]\delta_{mk}$. For any sampling and basis set, we can rescale $\mB_{:,m}$ such that $\mathbb{E}[c_m^2]=1$ to simplify the following. By linearity of expectation, we have that the expected gradient is simply 
\begin{equation}
    \frac{\partial \mathcal L}{\partial \mW} = -(\mA - \mW)\mB\mB^T.
\end{equation}
The minima will satisfy
\begin{equation}
0 = (\mA - \mW ) \mB \mB^T
\end{equation}
via the dynamics
\begin{equation}
\mW_{t+1} = \mW_t + \eta (\mA - \mW_t ) \mB \mB^T.
\end{equation}
With initial condition $\mW^0$, we will show that the sequence converges to
\begin{equation}
\mW^* = \mA \mU \mU^T + \mW^0 (\mI - \mU \mU^T)
\end{equation}
where $\mU$ is the left $N_{\text{grid}}\times (p+1)$ eigenvector matrix of the SVD decomposition $\mB=\mU \mD\mV^T$.
Define the error at each step $t$ as 
\begin{equation}
\mE_t = \mW_t - \mW^*
\end{equation}
The error can be decomposed into two components. $\mE^{||}$ which is in the row space of $\mB\mB^T$ and $\mE^\bot$ which is in the null space  $null(\mB\mB^T)=I-\mU\mU^T$. 
The matrix $\mB\mB^T=\mU \mD \mV^T \mV \mD^T\mU^T=\mU \mD^2\mU^T$.
Since the gradient update is always in the row space of $\mB \mB^T$ this implies that $\mE^\bot$ is constant 0 as the following calculation shows
\begin{align*}
\mE_{0}^{\bot} &= \mW_0^{\bot} - \mW^{*\bot} \\
&= \mW_0(\mI-\mU\mU^T) - (\mA\mU\mU^T + \mW_0(\mI-\mU\mU^T))(\mI-\mU\mU^T) \\
&= \mW_0(\mI-\mU\mU^T) - \mW_0(\mI-\mU\mU^T) = 0
\end{align*}
where we used $\mU\mU^T(\mI-\mU\mU^T)=0$. Thus, only considering the component parallel to $\mU\mU^T$, we perform the algebra on the updates to the error,
\begin{align*}
\mE^{||}_{t+1} &= \mW_{t+1}^{||} - \mW^{*||} = (\mW_{t+1} - \mW^*)\mU\mU^T \\
&= (\mW_{t+1} - \mA\mU\mU^T - \mW_0(\mI-\mU\mU^T))\mU\mU^T \\
&= \mW_{t+1}\mU\mU^T - \mA\mU\mU^T \\
&= (\mW_{t} +\eta(\mA-\mW_t)\mB\mB^T)\mU\mU^T - \mA\mU\mU^T \\
&= \mW_t\mU(\mI-\eta\mD^2)\mU^T -  \mA\mU(\mI-\eta\mD^2)\mU^T
\end{align*}
Assuming the singular values of $\mB$ are identical,
\begin{align*}
&= (1-\eta D^2)(\mW_{t}^{||} - \mW^{*||}) \\
& = (1-\eta D^2)\mE^{||}_{t}.
\end{align*}
Thus, for any $\eta < 1/\max(D^2)$, the components parallel to $\mU\mU^T$ go to zero $\mE^{||}_{t}  = (1 - \eta D^2)^t \mE^{||}_{0} \to 0$.
As both error terms converge to 0, we have shown $\mW_t \to \mW^*$ as desired. 
\end{proof}

\subsection{\label{app:thmA:tight}Corollary to Theorem  \ref{thm:linear-case}: Error Estimate}

If the Green's function and data basis is known, the analytical solution $\mA$ can be computed to give estimate of error, shown in Fig.\ref{fig:theory_error_one_line}.
The procedure is as follows: Compute a matrix of basis functions evaluated at each point in the domain,
\begin{equation}
\mB_{ij} := [ \phi_0(x_i), \phi_1(x_i), \ldots, \phi_p(x_i)] \quad x_i = i\Delta x
\end{equation}
and compute the SVD of the basis matrix:
\begin{equation}
    \mU, \mD, \mV := \mathtt{SVD}(\mB)
\end{equation}
Generate the Green's function matrix by symbolically integrating the Green's function using a compact basis:
\begin{equation}
    \mA_{ij} = \int_\Omega \psi_i(s) G(s, x_j) \mathrm{d}x
\end{equation}
where $\psi_i$ is a piecewise linear or piecewise constant basis function centered on $x_i$.
Then, the error starting at any initial matrix $\mW^0$ can be directly computed as
\begin{equation}
    \|\mW^*-\mA\|_F \approx \|\mW^0-\mA \mU \mU^T\|_F
\end{equation}
where the computation for $\mA$ is imperfect from the arbitrary choice of discretization basis $\psi(x)$.

\section{\label{app:hyper}Procedure Details}
The ultimate experiments ran in float32 on a MacBook Pro M3 Max GPU using PyTorch.
To initially rule out errors from accelerator hardware, drivers, or numerical precision, we replicated the linear models and deep models in many implementations: using Tinygrad, JAX, and PyTorch; on CPU, NVIDIA 4070 GPU, and Apple Silicon GPU hardware; and varied precision in float32 and float64.
The behavior was consistent across all hardware and software variations.


Finite difference experiments were performed using pysindy \citep{Kaptanoglu2022}; the non-temporal spatial problem was implemented by setting
$\mathtt{x\_dot}[i]=\vf_i$.
The PINN, DeepONet, and Physics-Informed DeepONet were trained using the DeepXDE library \citep{lu2021deepxde}.
The Neural Operator is trained using the neuralop library \citep{kossaifi2024neural}.
The piecewise linear datasets were generated using the finite element method using FEniCS \citep{baratta_2023_10447666}.


All models were trained using AdamW with weight decay $\lambda = 0.01$.
The Linear and Deep Linear models used full-batch training on 10k examples for 2,000 epochs with a linear learning rate schedule from 1e-1 to 1e-6; the Deep Linear has one hidden layer of dimension 100.
The MLP (1 layer, hidden dimension 1024, Leaky ReLU) and Fourier Neural Operator (64 modes, hidden dimension 32, GeLU) used batch size 256 for 5,000 epochs with a linear schedule from 1e-2 to 1e-6.
The DeepONet used branch tower dimensions [1, 256, 256], trunk tower dimensions [$N_{\text{grid}}$, 256, 256], ReLU activation, batch size 256, and a step schedule from 1e-3 decaying to 1e-6 every 5,000 steps over 20,000 total steps.


\section{\label{sec:repeatability}Repeatability of Test Set Generalization}

Repeated experiments with different seeds are shown in Figures \ref{fig:error_bars_pysindy}, \ref{fig:error_bars_linear}, \ref{fig:error_bars_deep_linear}, \ref{fig:error_bars_MLP}, \ref{fig:error_bars_deeponet}, and \ref{fig:error_bars_neuralop}.
Each line on this plot corresponds to training on one function class identified in the legend, which corresponds to a row in the heat map from Figures \ref{fig:cross_error} and \ref{fig:cross_error_deeponet_neuralop}.
The error bars represent min/max over 5 runs, and the line is the average of the five.
The repeated runs all show the same qualitative trends of generalization and overfitting in the same regimes for each of the six types of models.

\def\figwidth{0.45\textwidth}
\begin{figure*}
    \centering
    \begin{subfigure}[t]{\figwidth}
        \centering
        \includegraphics[width=\textwidth,trim = 0 0 0 0,clip]{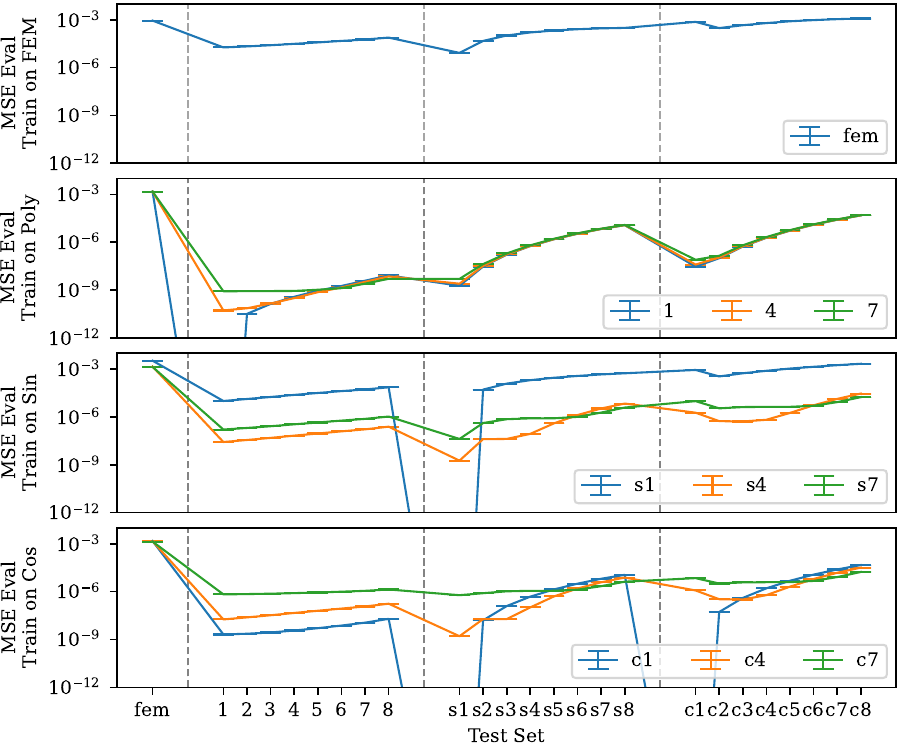}
        \caption{\label{fig:error_bars_pysindy}Finite Difference SINDy model.}
    \end{subfigure}
    \hfill
    \begin{subfigure}[t]{\figwidth}
        \centering
        \includegraphics[width=\textwidth,trim = 0 0 0 0,clip]{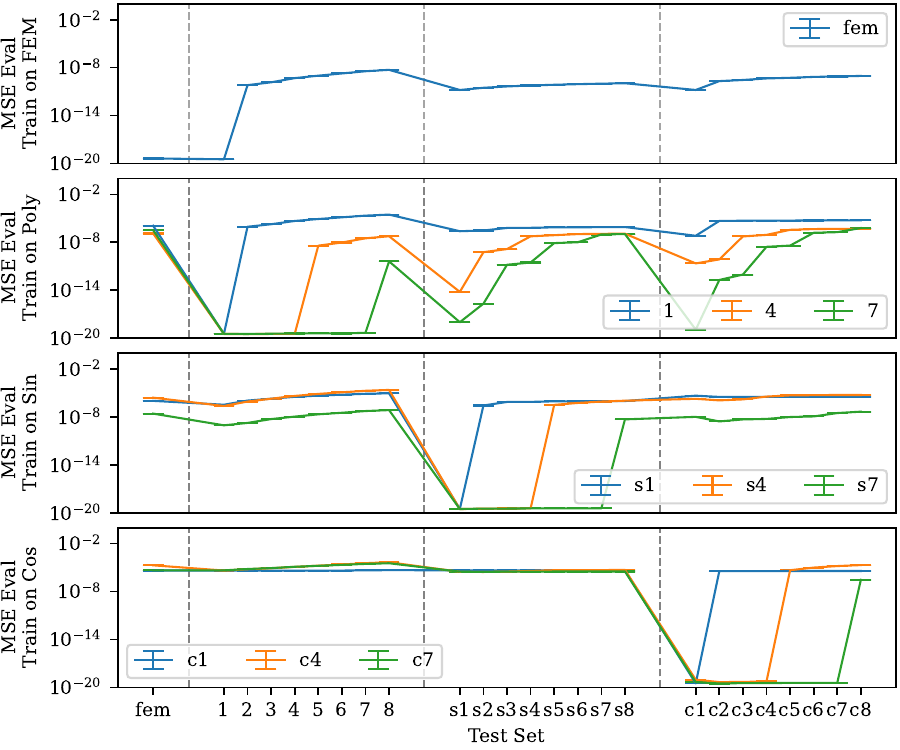}
        \caption{\label{fig:error_bars_linear}Linear model}
    \end{subfigure}
    
    \begin{subfigure}[t]{\figwidth}
        \centering
        \includegraphics[width=\textwidth,trim = 0 0 0 0,clip]{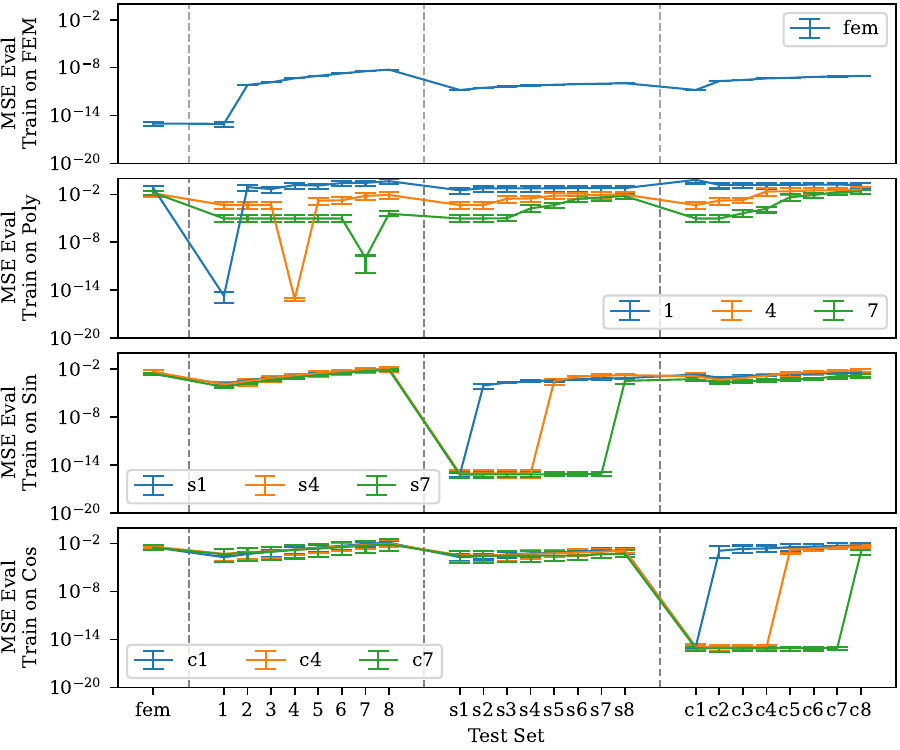}
        \caption{\label{fig:error_bars_deep_linear}Deep linear model}
    \end{subfigure}
    \hfill
    \begin{subfigure}[t]{\figwidth}
        \centering
        \includegraphics[width=\textwidth,trim = 0 0 0 0,clip]{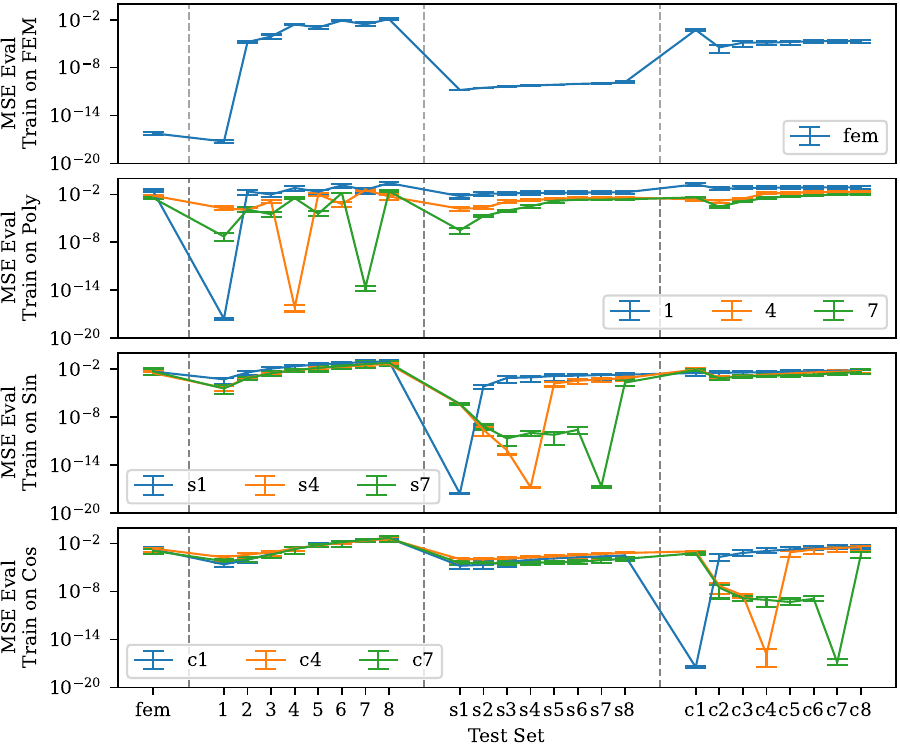}
        \caption{\label{fig:error_bars_MLP}Multilayer Perceptron}
    \end{subfigure}
    
    \begin{subfigure}[t]{\figwidth}
        \centering
        \includegraphics[width=\textwidth,trim = 0 0 0 0,clip]{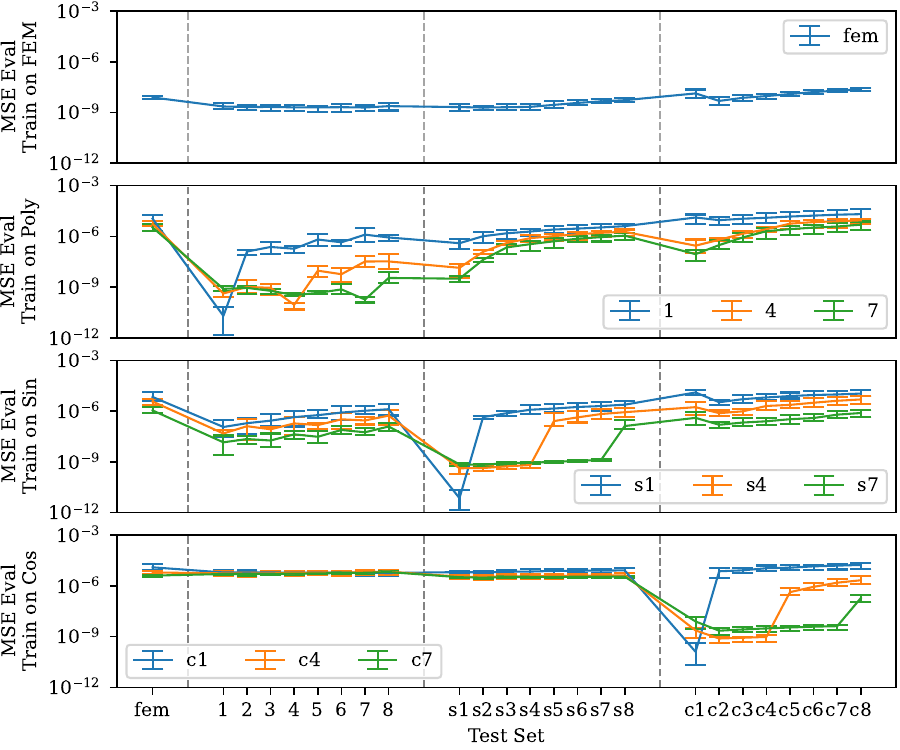}
        \caption{\label{fig:error_bars_deeponet}DeepONet}
    \end{subfigure}
    \hfill
    \begin{subfigure}[t]{\figwidth}
        \centering
        \includegraphics[width=\textwidth,trim = 0 0 0 0,clip]{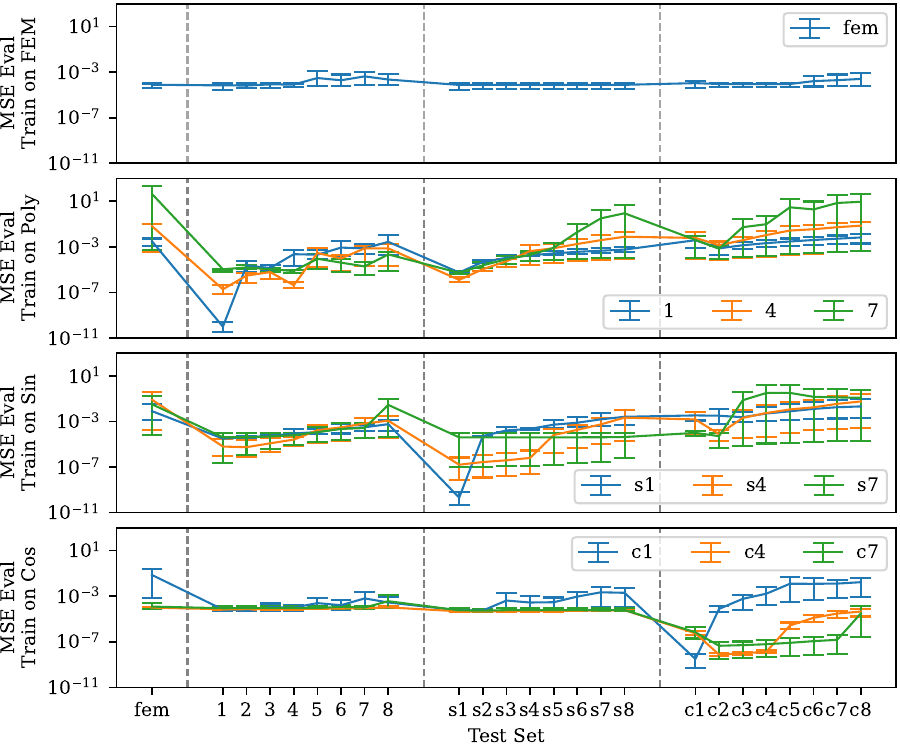}
        \caption{\label{fig:error_bars_neuralop}Neural Operator}
    \end{subfigure}
    \caption{\label{fig:error_bars_all}Generalization errors for all models with five different seeds for each training run. For the Finite Difference SINDy model, the lower y limit is truncated to $10^{-12}$; training on Poly 1, Sine 1 and Cosine 1 achieves error $\approx 10^{-38}$.
    }
\end{figure*}

\section{\label{app:grid}Grid Size Generalization}

\begin{figure*}
    \centering
    \includegraphics[width=0.7\textwidth,trim = 20 0 10 0,clip]{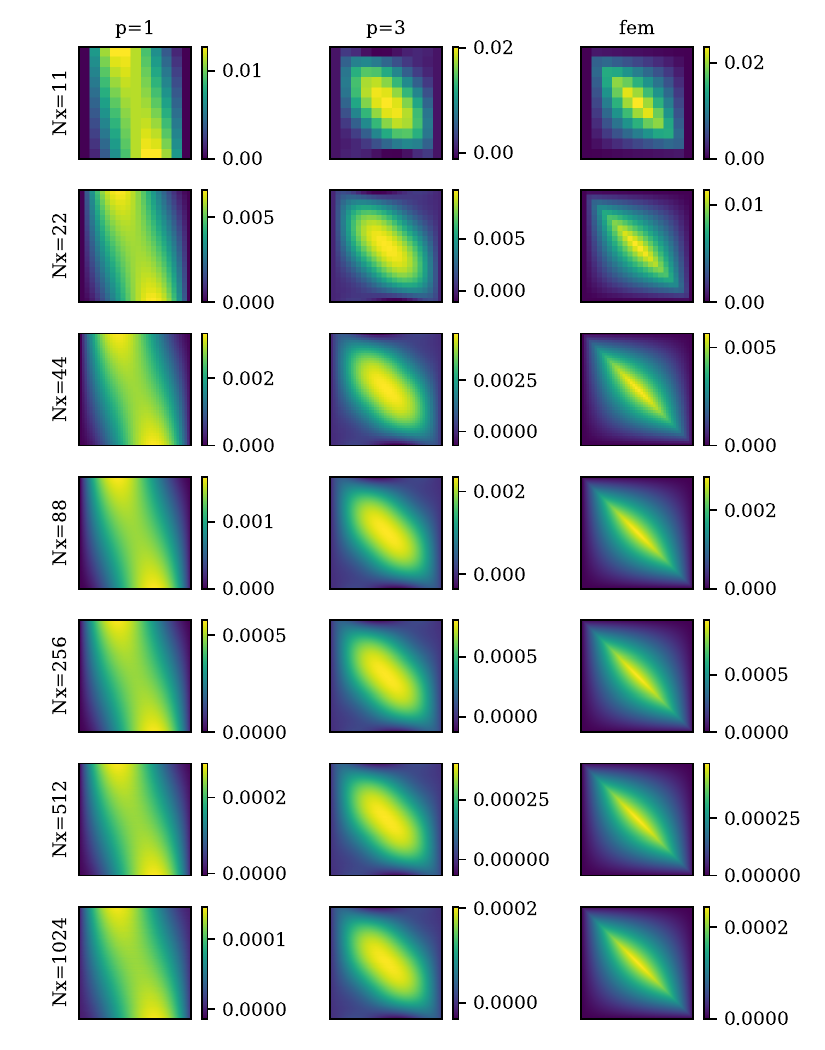}
    \caption{\label{fig:grid_convergence_W}Convergence of the weight matrix $W$ as the grid size increases. The matrix structure is invariant to the discretization, however, each parameter scales proportional to $\Delta x$ (Eq.~\ref{eq:A}).}
\end{figure*}

\begin{figure*}
    \centering
    \begin{subfigure}[b]{\textwidth}
        \centering
    \includegraphics[width=0.8\textwidth, trim=0 10 0 0,clip]{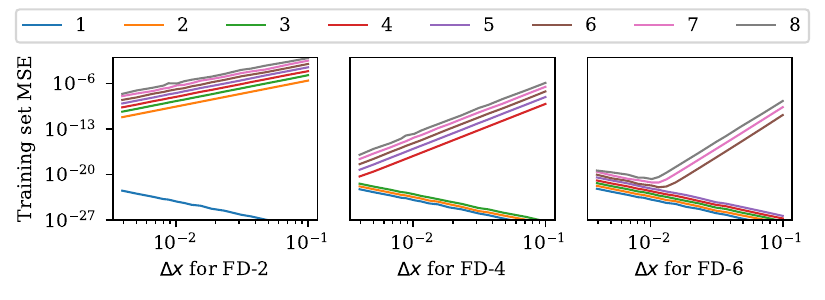}
    \caption{\label{fig:sindy:a}}
    \end{subfigure}
    \begin{subfigure}[b]{\textwidth}
        \centering
    \includegraphics[width=0.8\textwidth, trim=0 10 0 0,clip]{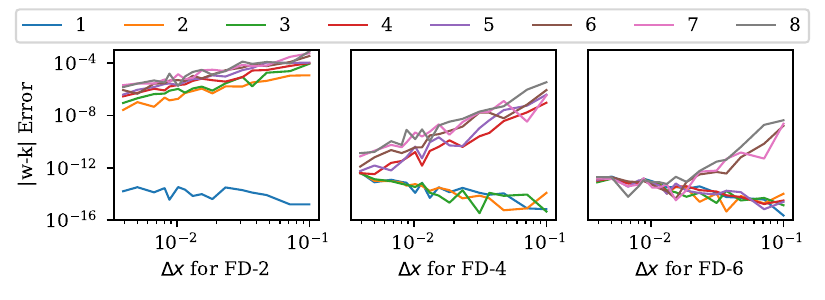}
    \caption{\label{fig:sindy:b}}
    \end{subfigure}
    \caption{\label{fig:sindy_error} Experiments on SINDy equation discovery as the grid size (x-axis) and training data polynomial function order (line color in legend) is changed.
    \ref{fig:sindy:a}: training data MSE (observable by the scientist). \ref{fig:sindy:b} : difference from true parameter (not observable). After convergence, the error tends to increase; we believe this is an artifact of accumulation of floating point error.}
\end{figure*}

We consider the situation where we train and evaluate models on datasets produced at different grid spacings. 
Note that $\Delta x$ is a property of the training data: decreasing $\Delta x$ in practice would require finer measurements.
In Fig.~\ref{fig:grid_convergence_W}, we observe that the shape of the matrix does not change as the grid size increases.
The polynomial datasets have the same number of coefficients for each grid size. However, the rank of ``fem'' dataset does increase as the piecewise linear functions are sampled on the new grid.
For the finite difference method in Fig.~$\ref{fig:sindy_error}$, we do observe a difference in the error. As predicted by Theorem 2, the error on the learned parameter is proportional to both the grid size and polynomial order of the dataset. The order of the stencil does change the rate of convergence towards the true parameter.

\end{document}